\title{Boosting of Image Denoising Algorithms\thanks{This research was supported by the European Research Council under EU's 7th Framework Program, ERC Grant agreement no. 320649, by the Intel Collaborative Research Institute for Computational Intelligence, and by Google Faculty Research Award.
}}
\author{Yaniv~Romano\footnotemark[2]\
\and Michael~Elad\footnotemark[3]}
\begin{document}
\maketitle
\newcommand{\slugmaster}{}

\renewcommand{\thefootnote}{\fnsymbol{footnote}}
\footnotetext[2]{Department of Electrical Engineering, Technion -- Israel Institute of Technology, Technion City, Haifa 32000, Israel (yromano@tx.technion.ac.il).}
\footnotetext[3]{Department of Computer Science, Technion -- Israel Institute of Technology, Technion City, Haifa 32000, Israel (elad@cs.technion.ac.il).}

\renewcommand{\thefootnote}{\arabic{footnote}}

\begin{abstract}
\label{abstract}
In this paper we propose a generic recursive algorithm for improving image denoising methods. Given the initial denoised image, we suggest repeating the following ''SOS'' procedure: (i) \emph{(S)trengthen} the signal by adding the previous denoised image to the degraded input image, (ii) \emph{(O)perate} the denoising method on the strengthened image, and (iii) \emph{(S)ubtract} the previous denoised image from the restored signal-strengthened outcome. The convergence of this process is studied for the K-SVD image denoising and related algorithms. Still in the context of K-SVD image denoising, we introduce an interesting interpretation of the SOS algorithm as a technique for closing the gap between the local patch-modeling and the global restoration task, thereby leading to improved performance. In a quest for the theoretical origin of the SOS algorithm, we provide a graph-based interpretation of our method, where the SOS recursive update effectively minimizes a penalty function that aims to denoise the image, while being regularized by the graph Laplacian. We demonstrate the SOS boosting algorithm for several leading denoising methods (K-SVD, NLM, BM3D, and EPLL), showing tendency to further improve denoising performance.
\end{abstract}

\begin{keywords}
Image restoration, denoising, boosting, sparse representation, K-SVD, graph Laplacian, graph theory, regularization.
\end{keywords}

\begin{AMS}68U10, 94A08, 62H35, 05C50, 47A52, 68R10\end{AMS}

\pagestyle{myheadings}
\thispagestyle{plain}
\markboth{YANIV~ROMANO AND MICHAEL~ELAD}{BOOSTING~OF~IMAGE~DENOISING~ALGORITHMS}

\def \x{{\mathbf x}}
\def \v{{\mathbf v}}
\def \xh{{\hat{\x}}}
\def \xi{\x_i}

\def\oneb{\mathrm{1}}
\def\one{\underbar{$ \oneb $}}
\def\y{{\mathbf y}}
\def\z{{\mathbf z}}
\def\p{{\mathbf p}}
\def\q{{\mathbf q}}
\def\U{{\mathbf U}}

\def\Gam{{\mathbf \Gamma}}
\def\Q{{\mathbf Q}}
\def\D{{\mathbf D}}
\def\W{{\mathbf W}}
\def\Z{{\mathbf Z}}
\def\Zh{{\mathbf \hat \Z }}
\def\V{{\mathbf V}}
\def\S{{\mathbf S}}
\def\G{{\mathbf G}}
\def\Dh{{\hat{\D}}}

\def\L{{\mathrm L}}
\def\I{{\mathrm I}}
\def\b{{\mathrm b}}
\def\W{{\mathbf W}}
\def\A{{\mathbf A}}
\def\B{{\mathrm B}}
\def\Ai{\A_i}
\def\T{{\mathrm T}}
\def\Ti{\T_i}
\def\R{{\mathbf R}}
\def\r{{\mathrm r}}
\def\c{{\mathrm c}}
\def\hhsigma{{\hat{\sigma}}}
\def\RR{{\mathbb R}}

\section{Introduction}
\label{intro}
Image denoising is a fundamental restoration problem. Consider a given measurement image $ \y \in \RR^{r\times c} $, obtained from the clean signal $ \x \in \RR^{r\times c} $ by a contamination of the form
\begin{align}
\label{deg_model}
\y = \x + \v,
\end{align}
where $ \v \in \RR^{r\times c} $ is a zero-mean additive noise that is independent with respect to $ \x $. Note that $\x$ and $\y$ are held in the above equation as column vectors after lexicographic ordering. A solution to this inverse problem is an approximation $\hat{\x}$ of the unknown clean image $\x$. 

Plenty of sophisticated algorithms have been developed in order to estimate the original image content, 
the NLM \cite{NL_DENOISE_REF4}, K-SVD \cite{KSVD_REF1}, BM3D \cite{BM3D_REF}, EPLL \cite{zoran2011learning}, and others \cite{PLE_REF, burger2012image, ram2013image, NL_DENOISE_REF3, talebi2013saif, romanoimproving}. These algorithms rely on powerful image models/priors, where sparse representations \cite{SPARSE_REF1,SPARSE_REF2} and processing of local patches \cite{lebrun2012secrets} have become two prominent ingredients.

Despite the effectiveness of the above denoising algorithms, improved results can be obtained by applying a boosting technique (see \cite{talebi2013saif,charest2006general, moderntour} for more details). There are several such techniques that were proposed over the years, e.g. ''twicing'' \cite{tukey1977exploratory}, Bregman iterations \cite{osher2005iterative}, $ l_2 $-boosting \cite{buhlmann2003boosting}, SAIF \cite{talebi2013saif} and more (e.g. \cite{romanoimproving}). These algorithms are closely related and share in common the use of the residuals (also known as the ''method-noise'' \cite{NL_DENOISE_REF4}) in order to improve the estimates. The residual is defined as the difference between the noisy image and its denoised version. Naturally, the residual contains signal leftovers due to imperfect denoising (together with noise). 

For example, motivated by this observation, the idea behind the twicing technique \cite{tukey1977exploratory} is to extract these leftovers by denoising the residual, and then add them back to the estimated image. This can be expressed as \cite{charest2006general}
\begin{align}
\label{twicing_algo}
\xh^{k+1} = \xh^{k} + f\left( \y - \xh^{k} \right),
\end{align}
where the operator $ f\left( \cdot \right)$ represents the denoising algorithm and $ \xh^{k} $ is the $ k^{th} $ iteration denoised image. The initialization is done by setting $ \xh^{0} = \textbf{0}$.

Using the concept of Bregman distance \cite{bregman1967relaxation} in the context of total-variation denoising \cite{rudin1992nonlinear}, Osher et al. \cite{osher2005iterative} suggest exploiting the residual by
\begin{align}
\label{bregman_algo}
\xh^{k+1} = f\left( \y + \sum_{i=1}^{k}{ \left( \y - \xh^{i}\right)  }\right),
\end{align}
where the recursive function is initialized by setting $ \xh^{0} = \textbf{0}$. Note that if the denoising algorithm $ f\left( \cdot \right)$ can be represented as a linear (data-independent) matrix, Equations (\ref{twicing_algo}) and (\ref{bregman_algo}) coincide \cite{charest2006general}. Furthermore, for these two boosting techniques, it has been shown \cite{talebi2013saif} that as $ k $ increases, the estimate $ \xh^{k} $ returns to the noisy image $ \y $.

Motivated by the above-mentioned algorithms, our earlier work \cite{romanoimproving} improves the K-SVD \cite{KSVD_REF1}, NLM \cite{NL_DENOISE_REF4} and the first-stage of the BM3D \cite{BM3D_REF} by applying an iterative boosting algorithm that extracts the ''stolen'' image content from the method-noise image. The improvement is achieved by adding the extracted content back to the initial denoised result. The work in \cite{romanoimproving} suggests representing the signal leftovers of the method-noise patches using the same basis/ support that was chosen for the representation of the corresponding clean patch in the initial denoising stage. As an example, in the context of the K-SVD, the supports are sets of atoms that participate in the representation the noisy patches.

However, in addition to signal leftovers that reside in the residual image, there are noise leftovers that are found in the denoised image. Driven by this observation, SAIF \cite{talebi2013saif} offers a general framework for improving spatial domain denoising algorithms. Their algorithm controls the denoising strength locally by filtering iteratively the image patches. Per each patch, it chooses automatically the improvement mechanism: twicing or diffusion, and the number of iterations to apply. The diffusion \cite{moderntour} is a boosting technique that suggests repeating applications of the same denoising filter, thereby removing the noise leftovers that rely in the previous estimate (sometimes also sacrificing some of the high-frequencies of the signal).

In this paper we propose a generic recursive function that treats the denoising method as a ''black-box'' and has the ability to push it forward to improve its performance. Differently from the above methods, instead of adding the residual (which mostly contains noise) back to the noisy image, or filtering the previous estimate over and over again (which could lead to over-smoothing), we suggest \emph{strengthening the signal} by leveraging on the availability of the denoised image. More specifically, given an initial estimation of the cleaned image, improved results can be achieved by repeating iteratively the following SOS procedure:
\begin{remunerate}
\item \emph{Strengthen} the signal by adding the previous denoised image to the noisy input image.
\item \emph{Operate} the denoising method on the strengthened image.
\item \emph{Subtract} the previous denoised image from the restored signal-strengthened outcome.
\end{remunerate}
The core equation that describes this procedure can be written in the following form: 
\begin{align}
\label{sos_algo}
\xh^{k+1} = f\left( \y + \xh^{k}\right) - \xh^k,
\end{align}
where $ \xh^{0} = \textbf{0}$. As we show hereafter, a performance improvement is achieved since the signal-strengthened image can be denoised more effectively compared to the noisy input image, due to the improved Signal to Noise Ratio (SNR).

The convergence of the proposed algorithm is studied in this paper by formulating the linear part of the denoising method and assessing the iterative system's matrix properties. In this work we put special emphasis on the K-SVD and describe the resulting denoising matrix and the corresponding convergence properties related to it. The work by Milanfar \cite{moderntour} shows that most existing denoising algorithms (e.g. NLM \cite{NL_DENOISE_REF4}, Bilateral filter \cite{tomasi1998bilateral}, LARK \cite{chatterjee2012patch}) can be represented as a row-stochastic positive definite matrices. In this context, our analysis suggests that for most denoising algorithms, the proposed SOS boosting method is guaranteed to converge. {\color{black}{Therefore, we get a straightforward stopping criterion.}} 

In addition, we introduce an interesting interpretation of the SOS boosting algorithm, related to a major shortcoming of patch-based methods: the gap between the local patch-processing and the global need for a whole restored image. In general, patch-based methods (i) break the image into overlapping patches, (ii) restore each patch (local processing), and (iii) reconstruct the image by aggregating the overlapping patches (the global need). The aggregation is usually done by averaging the overlapping patches. The proposed SOS boosting is related to a different algorithm that aims to narrow the local-global gap mentioned above \cite{romanosharing}. Per each patch, this algorithm defines the difference between the local (intermediate) result and the patch from the global outcome as a ''disagreement''. Since each patch is processed independently, such disagreement naturally exists.

Interestingly, in the context of the K-SVD image denoising, the SOS algorithm is equivalent to repeating the following steps (see \cite{romanosharing} and Section \ref{sharing_sec} for more details): (i) compute the disagreement per patch, (ii) subtract the disagreement from the degraded input patches, (iii) apply the restoration algorithm on these patches, and (iv) reconstruct the image. Therefore, the proposed algorithm encourages the overlapping patches to share their local information, thus reducing the gap between the local patch-processing and the global restoration task.

The above should remind the reader of the EPLL framework \cite{zoran2011learning}, which also addresses the local-global gap. EPLL encourages the patches of the \emph{final image} (i.e. after patch-averaging) to comply with the local prior. In EPLL, given a local patch model, the algorithm alternates between denoising the previous result according to the local prior, followed by an image reconstruction step (patch-averaging). Several local priors can use this paradigm -- Gaussian Mixture Model (GMM) is suggested in the original paper \cite{zoran2011learning}. Similarly, EPLL with sparse and redundant representation modeling has been recently proposed in \cite{jereepll}. EPLL bares some resemblance to diffusion methods \cite{moderntour}, as it amounts to iterated denoising with a diminishing variance setup, in order to avoid an over-smoothed outcome. In practice, at each diffusion step, the authors of \cite{zoran2011learning,jereepll} empirically estimate the noise that resides in $ \xh^k $ (which is neither Gaussian nor independent of $ \xh^k $). In contrast, in our scheme, setting this parameter is trivial -- the noise level of $ \y+\xh^k $ is nearly $ \sigma $, regardless of the iteration number.

In the context of image denoising, several works (e.g. \cite{elmoataz2008nonlocal,bougleux2009local,PeymanLaplaceDenoising,PeymanLaplace,symm}) suggest representing an image as a weighted graph, where the weights measure the similarity between the pixels/patches. Since the graph Laplacian describes the structure of the underlying signal, it can be used as an adaptive regularizer, as done in the above-mentioned methods. Put differently, the graph Laplacian preserves the geometry of the image by promoting similar pixels to remain similar, thus achieving an effective denoising performance. It turns out that the steady-state outcome of the SOS minimizes a cost function that involves the graph Laplacian as a regularizer, providing another justification for the success of our method. Furthermore, influenced by the SOS mechanism, we offer novel iterative algorithms that minimize the graph Laplacian cost functions that are defined in \cite{elmoataz2008nonlocal,bougleux2009local,PeymanLaplaceDenoising}. Similarly to the SOS, the proposed iterative algorithms treat the denoiser as a ''black-box'' and operate on the strengthened image, without an explicit construction of the weighted graph.

This paper is organized as follows: In Section \ref{background} we provide brief background material on sparse representation and dictionary learning, with a special attention to the K-SVD denoising and its matrix form. In Section \ref{poposed_algo} we introduce our novel SOS boosting algorithm, study its convergence, and generalize it by introducing two parameters that govern the steady-state outcome, the requirements for convergence and the rate-of-convergence. In Section \ref{disagreement_algo} we discuss the relation between the SOS boosting and the local-global gap. In Section \ref{graph} we provide a graph-based analysis to the steady-state outcome of the SOS, and offer novel recursive algorithms for related graph Laplacian methods. Experiments are brought in Section \ref{experiments}, showing a meaningful improvement of the K-SVD image denoising, and similar boosting for other methods -- the NLM, BM3D, and EPLL. Conclusions and future research directions are drawn in Section \ref{conclusions}.

\section{K-SVD Image Denoising Revisited}
\label{background}

We bring the following discussion on sparse representations and specifically the K-SVD image denoising algorithm, because its matrix interpretation will serve hereafter as a benchmark in the convergence analysis.

\subsection{Sparse Representation \& K-SVD Denoising}

The sparse-land modeling \cite{SPARSE_REF1,SPARSE_REF2} assumes that a given signal $x\in\RR^n$ (in this context, the signal $ x $ is not necessarily an image) can be well represented as $ x=\D\alpha $, where $ \D\in\RR^{n \times m} $ is a dictionary composed of $m \ge n$ atoms as its columns, and $ \alpha\in\RR^m $ is a sparse vector, i.e, has a few non-zero coefficients. For a noisy signal $ y = x + v$, we seek a representation $ \hat{\alpha} $ that approximates $ x $ up to an error bound, which is proportional to the amount of noise in $ v $. This is an NP-hard problem that can be expressed as
\begin{align}
\label{background_sp}
\hat{\alpha}  = \min_{\alpha} {\Arrowvert \alpha \Arrowvert}_0  \hspace{0.5em} \mathrm{s.t.} \hspace{0.5em}  \Arrowvert\D \alpha-y\Arrowvert_{2}^{2}\le\epsilon^2,
\end{align}
where $\Arrowvert\alpha\Arrowvert_0$ counts the non-zero coefficients in $\alpha$, and the constant $\epsilon$ is an error bound. There are many efficient sparse-coding algorithms that approximate the solution of Equation (\ref{background_sp}), such as OMP \cite{OMP_REF}, BP \cite{BASIS_REF}, and others \cite{SPARSE_REF2, tropp2010computational}.

The above discussion assumes that $ \D $ is known and fixed. A line of work (e.g. \cite{MOD_REF, smith2013improving, KSVD_REF2}) shows that adapting the dictionary to the input signal results in a sparser representation. In the case of denoising, under an error constraint, since the dictionary is adapted to the image content, the subspace that the noisy signal is projected onto is of smaller dimension, compared to the case of a fixed dictionary. This leads to a stronger noise reduction, i.e, better restoration. Given a set of measurements $ \{\y_i\}_{i=1}^{N} $, a typical dictionary learning process \cite{KSVD_REF2,MOD_REF} is formulated as
\begin{align}
\label{dictionarylearning}
\left[ {\Dh}, \{\hat \alpha_i\}_{i=1}^{N} \right]  =  \min_{\D, \{\alpha_i\}_{i=1}^{N}} \sum_{i=1}^{N} \gamma_i\|\alpha_i\|_0 + \|\D\alpha_i - y_i\|_2^2, 
\end{align}
where $ \Dh $ and $ \{\hat \alpha_i\}_{i=1}^{N} $ are the resulting dictionary and representations, respectively. The scalars $ \gamma_i $ are signal dependent, so as to comply with a set of constraints of the form $ \|\D\alpha_i - y_i\|_2^2 \le \epsilon^2 $.

Due to computational demands, adapting a dictionary to large signals (images in our case) is impractical. Therefore, a treatment of an image is done by breaking it into overlapping patches (e.g. of size $ 8 \times 8 $). Then, each patch is restored according to the sparsity-inspired prior. More specifically, the K-SVD image denoising algorithm \cite{KSVD_REF1} divides the noisy image into $\sqrt{n} \times \sqrt{n}$ fully overlapping patches, then processes them \emph{locally} by performing iterations of sparse-coding (using OMP) and dictionary learning as described in Equation (\ref{dictionarylearning}). Finally, the \emph{global} denoised image is obtained by returning the cleaned patches to their original locations, followed by an averaging with the input noisy image. The above procedure approximates the solution of
\begin{align}
\label{KSVD_func}
\left[ {\xh},{\Dh}, \{\hat \alpha_i\}_{i=1}^{N} \right]  =
 \min_{\x,\D, \{\alpha_i\}_{i=1}^{N}}  \mu \|\x-\y \|_2^2 
 + \sum_{i=1}^{N} \gamma_i\|\alpha_i\|_0 + \|\D\alpha_i - \R_i \x\|_2^2, 
\end{align}
where $\xh \in\RR^{rc}$ is the resulting denoised image, $ N $ is the number of patches, and $\R_i\in\RR^{n \times rc}$ is a matrix that extracts the $i^{th}$ patch from the image. The first term in Equation (\ref{KSVD_func}) demands a proximity between the noisy and denoised images. The second term demands that each patch $ \R_i\x $ is represented sparsely up to an error bound, with respect to a dictionary $ \D $. As to the coefficients $ \gamma_i $, those are spatially dependent, and set as explained in Equation ($ \ref{dictionarylearning} $).

\subsection{K-SVD Image Denoising: A Matrix Formulation}
\label{matform}

The K-SVD image denoising can be divided into non-linear and linear parts. The former is composed of preparation steps that include the support determination within the sparse-coding and the dictionary update, while the outcome of the latter is the actual image-adaptive filter that cleans the noisy image. The matrix formulation of the K-SVD denoising represents its linear part, assuming the availability of the non-linear computations. At this stage we should note that \emph{the following formulation is given as a background to the theoretical analysis that follows, and it is not necessary when using the proposed SOS boosting in practice}.

Sparse-coding determines per each noisy patch $ \R_i\y $ a small set of atoms $ \D_{s_i} $ that participate in its representation. Following the last step of the OMP \cite{OMP_REF}, given $ \D_{s_i} $, the representation\footnote{We abuse notations here as $ \alpha_i $ refers hereafter only to the non-zero part of the representation, being a vector of length $ |s_i| \ll m $.} $ \alpha_i $ of the clean patch is obtained by solving
\begin{align}
\label{denoise_patch_min}
\alpha_i  = \min_{z} \| \D_{s_i} z - \R_i\y \|_2^2,
\end{align}
which has a closed-form solution
\begin{align}
\label{denoise_patch_closed_form}
\alpha_i = (\D_{s_i}^{T}\D_{s_i})^{-1}\D_{s_i}^{T} \R_i\y. 
\end{align}
Given $ \alpha_i $, the clean patch $ \hat{\p}_i $ is obtained by applying the inverse transform from the representation to the signal/ patch space, i.e., 
\begin{align}
\label{clean_patch}
\hat{\p}_i & = \D_{s_i}\alpha_i \\ \notag
     & = \D_{s_i}(\D_{s_i}^{T}\D_{s_i})^{-1}\D_{s_i}^{T} \R_i\y. 
\end{align}
Notice that although the computation of $ s_i $ is non-linear, the clean patch $ \hat{\p}_i $ is obtained by filtering its noisy version, $ \R_i\y $, with a linear, image-adaptive, symmetric and normalized filter. 

Following Equation (\ref{KSVD_func}) and given all $ \hat{\p}_i = \D_{s_i}\alpha_i $, the globally denoised image $ \xh $ is obtained by minimizing
\begin{align}
\label{denosied_image_optimization}
{\xh} =
\min_{\x} \text{   } \mu \|\x-\y \|_2^2 + \sum_{i=1}^{N} \|\hat{\p}_i - \R_i \x\|_2^2. 
\end{align}
This is a quadratic expression that has a closed-form solution of the form
\begin{align}
\label{denosied_image_filtered}
\xh & { = \left( \mu \I + \sum_{i=1}^{N} \R_i^T\R_i \right)^{-1} \left( \mu \y + \sum_{i=1}^{N} \R_i^T\hat{\p}_i \right)}   \\ \notag
    & { = \left( \mu \I + \sum_{i=1}^{N} \R_i^T\R_i \right)^{-1} \left( \mu \I + \sum_{i=1}^{N} \R_i^T \D_{s_i}(\D_{s_i}^{T}\D_{s_i})^{-1}\D_{s_i}^{T} \R_i \right) \y } \\ \notag
    & { = \boldsymbol{D}^{-1}\boldsymbol{K}\y} \\ \notag
    & { = \W\y},
\end{align}
where $ \I \in \RR^{rc \times rc}$ is the identity matrix. The term $ \sum_{i=1}^{N} \R_i^T\R_i $ is a diagonal matrix that counts the appearances of each pixel (e.g. 64 for patches of size $ 8\times8 $) and $ \mu\I $ originates from the averaging with the noisy image $ \y $. The matrix $ \R_i^{T} $ returns a clean patch $ \hat{\p}_i $ to its original location in the global image. The matrix $ \W \in \RR^{rc \times rc} $ is the resulting filter matrix formulation of the linear part of the K-SVD image denoising. In the context of graph theory, $ \boldsymbol{D} $ and $ \boldsymbol{K} $ are called the degree and similarity matrices, respectively (see Section \ref{graph} for more information).

A series of works \cite{talebi2013saif, moderntour, talebi2014global} studies the algebraic properties of such formulations for several image denoising algorithms (NLM \cite{NL_DENOISE_REF4}, Bilateral filter \cite{tomasi1998bilateral}, Kernel Regression \cite{chatterjee2012patch}), for which the filter-matrix is non-symmetric and row-stochastic matrix. Thus, this matrix has real and positive eigenvalues in the range of $ [0, 1] $, and the largest eigenvalue is unique and equals to $ 1 $, with a corresponding eigenvector $ [1, 1, ... ,1]^T $ \cite{seneta1981springer, horn2012matrix}. In the K-SVD case, and under the assumption of periodic boundary condition\footnote{See Appendix \ref{periodic} for an explanation on this requirement.}, the properties of the resulting matrix somewhat different, and are given in the following theorem.
\pagebreak

\begin{theorem}
\label{PropertiesofW}
The resulting matrix $ \W $ has the following properties:
\begin{enumerate}
\item Symmetric $ \W=\W^T $, and thus all eigenvalues are real.
\item Positive definite $ \W \succ 0 $, and thus all eigenvalues are strictly positive.
\item Minimal eigenvalue of $ \W $ satisfy $ \lambda_{min}(\W) \geq \frac{\mu}{\mu + n}$, where $ n $ is the patch size.
\item Doubly stochastic, in the sense of $ \W \one = \W^T\one = \one $. Note that $ \W $ may have negative entries, which violates the classic definition of row- or column- stochasticity.
\item The above implies that $ 1 $ is an eigenvalue corresponding to the eigenvector $ \one $.
\item The spectral radius of $ \W $ equals to $ 1 $, i.e, $ \|\W\|_2=1 $.
\item The above implies that maximal eigenvalue satisfy $ \lambda_{max}(\W)=1 $.
\item The spectral radius $ \|\W-\I\|_2\leq\frac{n}{\mu + n}<1 $.
\end{enumerate}
\end{theorem}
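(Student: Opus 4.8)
The plan is to write $\W$ explicitly in terms of orthogonal projections and then read off all eight properties from the Loewner (positive–semidefinite) ordering. Set $\boldsymbol{P}_i := \D_{s_i}(\D_{s_i}^{T}\D_{s_i})^{-1}\D_{s_i}^{T}$; since OMP returns linearly independent atoms, $\D_{s_i}$ has full column rank and $\boldsymbol{P}_i$ is a genuine orthogonal projection, so $\boldsymbol{P}_i=\boldsymbol{P}_i^{T}=\boldsymbol{P}_i^{2}$ and $0\preceq\boldsymbol{P}_i\preceq\I$. By Equation (\ref{denosied_image_filtered}), $\W=\boldsymbol{D}^{-1}\boldsymbol{K}$ with $\boldsymbol{D}=\mu\I+\sum_i\R_i^{T}\R_i$ and $\boldsymbol{K}=\mu\I+\sum_i\R_i^{T}\boldsymbol{P}_i\R_i$. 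The one structural input I would use is that, under the periodic boundary assumption, each pixel is covered by exactly $n$ of the fully overlapping patches, so $\sum_i\R_i^{T}\R_i=n\I$ and hence $\boldsymbol{D}=(\mu+n)\I$ is a scalar matrix. This immediately gives $\W=\tfrac{1}{\mu+n}\bigl(\mu\I+\sum_i\R_i^{T}\boldsymbol{P}_i\R_i\bigr)$, a positive multiple of a sum of symmetric matrices, which is Property 1 (and, by the spectral theorem, real eigenvalues).

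Next I would sandwich $\W$. From $0\preceq\boldsymbol{P}_i\preceq\I$ we get $0\preceq\sum_i\R_i^{T}\boldsymbol{P}_i\R_i\preceq\sum_i\R_i^{T}\R_i=n\I$, and therefore $\tfrac{\mu}{\mu+n}\I\preceq\W\preceq\I$. The lower bound yields Property 2 ($\W\succ0$) and Property 3 ($\lambda_{min}(\W)\ge\tfrac{\mu}{\mu+n}$); the upper bound yields $\lambda_{max}(\W)\le1$, which I will combine below.

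For Properties 4 and 5 it suffices, by symmetry, to show $\W\one=\one$, i.e. $\boldsymbol{K}\one=(\mu+n)\one$. Here I invoke the standard fact that the K-SVD dictionary contains the (normalized) constant atom and that it is kept in every support, so the all-ones patch $\R_i\one$ lies in the column span of $\D_{s_i}$ and hence $\boldsymbol{P}_i\R_i\one=\R_i\one$. Then $\sum_i\R_i^{T}\boldsymbol{P}_i\R_i\one=\sum_i\R_i^{T}\R_i\one=n\one$, so $\boldsymbol{K}\one=\mu\one+n\one=(\mu+n)\one$; this proves double stochasticity in the stated sense and exhibits $\one$ as an eigenvector with eigenvalue $1$ (Properties 4, 5). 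Combining with $\lambda_{max}(\W)\le1$ from the sandwich gives $\lambda_{max}(\W)=1$ (Property 7), and since $\W$ is symmetric positive definite, $\|\W\|_2=\lambda_{max}(\W)=1$ (Property 6). Finally, for Property 8, $\W-\I$ is symmetric with eigenvalues $\lambda_j(\W)-1\in[-\tfrac{n}{\mu+n},0]$ by the eigenvalue bounds already established, so $\|\W-\I\|_2=\max_j|\lambda_j(\W)-1|\le\tfrac{n}{\mu+n}<1$.

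The only genuinely delicate points are the two modeling inputs, rather than any computation: that the periodic boundary condition collapses $\boldsymbol{D}$ to the scalar matrix $(\mu+n)\I$ --- without this, $\W=\boldsymbol{D}^{-1}\boldsymbol{K}$ is generally non-symmetric and Properties 1 and 6--8 need not hold --- and that the constant atom belongs to every support, which is what forces $\W\one=\one$. Everything downstream is routine manipulation of the Loewner order, so I expect the proof itself to be brief; the main obstacle is simply stating the boundary-condition reduction carefully, which is presumably the role of the referenced appendix.
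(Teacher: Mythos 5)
Your proof is correct, and for most of the properties (symmetry via the collapse $\sum_i\R_i^T\R_i=n\I$, the lower bound $\W\succeq\frac{\mu}{\mu+n}\I$ from positive semidefiniteness of the projection terms, the DC-atom argument for $\W\one=\one$, and the eigenvalue shift for property 8) it follows essentially the same route as the paper's Appendix B. The one place where you genuinely diverge is the upper bound $\lambda_{max}(\W)\le 1$ needed for properties 6 and 7: the paper partitions the $N$ overlapping patches into $n$ groups $\{\Omega_j\}_{j=1}^{n}$ of mutually non-overlapping patches, proves that each group sum $\tilde\W_j=\sum_{k\in\Omega_j}\R_k^T\hat\Z_k\R_k$ is idempotent (hence of unit spectral norm), and then applies the triangle inequality $\|\W\|_2\le\frac{\mu}{\mu+n}+\frac{1}{\mu+n}\sum_j\|\tilde\W_j\|_2=1$. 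You instead conjugate the operator inequality $0\preceq\boldsymbol{P}_i\preceq\I$ by $\R_i$ and sum, getting $\sum_i\R_i^T\boldsymbol{P}_i\R_i\preceq\sum_i\R_i^T\R_i=n\I$ directly, which sandwiches $\W\preceq\I$ in one line. Your argument is more elementary and shorter, avoiding the combinatorial partition and the idempotency computation entirely; the paper's version buys the additional structural fact that $\W$ decomposes into $n$ projections (which it reuses nowhere in the theorem, though the idempotency of $\hat\Z_i$ reappears in Appendix C). Both arguments rest on the same two modeling inputs you correctly flag as the delicate points: the periodic boundary condition making $\boldsymbol{D}$ scalar, and the constant atom being retained in every support so that $\boldsymbol{P}_i$ fixes the all-ones patch.
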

\noindent Appendix \ref{Wproperties} provides a proof for these claims.

For the denoising algorithms studied in \cite{talebi2013saif, moderntour, talebi2014global}, the matrix $ \W $ is not symmetric nor positive definite, however it can be approximated as such using the Sinkhorn procedure \cite{moderntour}. In the context of the K-SVD, as describe in Appendix \ref{periodic}, $ \W $ can become symmetric by a proper treatment of the boundaries (essentially performing cyclic processing of the patches).

To conclude, the discussion above shows that the K-SVD is a member in a large family of denoising algorithms that can be represented as matrices \cite{moderntour}. We will use this formulation in order to study the convergence of the proposed SOS boosting and for demonstrating the local-global interpretation.

\section{SOS Boosting} 
\label{poposed_algo}

In this section we describe the proposed algorithm, study its convergence, and generalize this algorithm by introducing two parameters that govern its steady-state outcome, the requirements for convergence and its rate.

\subsection{SOS Boosting - The Core Idea}
Leading image/patch priors are able to effectively distinguish the signal content from the noise. However, an emphasis of the signal over the noise could help the prior to better identify the image content, thereby leading to better denoising performance. As an example, the sparsity-based K-SVD could choose atoms that better fit the underlying signal. Similarly, the NLM, which cleans a noisy patch by applying a weighted average with its spatial neighbors, could determine better weights. This is the key idea behind the proposed SOS boosting algorithm, which exploits the previous estimation in order to enhance the underlying signal. In addition, the proposed algorithm treats the denoiser as a ''black-box'', thus it is easy to use and becomes applicable to a wide range of denoising methods.

As mentioned in Section \ref{intro}, the first class of boosting algorithms (twicing \cite{tukey1977exploratory} or its variants \cite{osher2005iterative, buhlmann2003boosting, romanoimproving}) suggest extracting the ''stolen'' content from the method-noise image, with the risk of returning noise back to the denoised image, together with the extracted information. On the other hand, the second class of boosting methods (diffusion \cite{moderntour} or EPLL \cite{zoran2011learning, jereepll}) aim at removing the noise that resides in the estimated image, with the risk of obtaining an over-smoothed result (this depends on the number of iterations or the denoiser parameters at each iteration). As a consequence, these two classes of boosting algorithms are somewhat lacking as they address only one kind of leftovers \cite{talebi2013saif} -- the one that reside in the method-noise or the other which is found in the denoised image. Also, these methods may result in under- or over- smoothed version of the noisy image.

Adopting a different perspective, we suggest \emph{strengthening} the signal by adding the clean image $ \xh^k $ to the noisy input $ \y $, and then \emph{operating} the denoising algorithm on the strengthened result. Differently from diffusion filtering, as the estimated part of the signal is emphasized, there is no loss of signal content that has not been estimated correctly (due to the availability of $ \y $). Differently from twicing, we hardly increase the noise level (under the assumption that the energy of the noise which resides in the clean image is small). Finally, a \emph{subtraction} of $ \xh^k $ from the outcome should be done in order to obtain a faithful denoised result. This procedure is formulated in Equation (\ref{sos_algo}):
\begin{align*}
\xh^{k+1} = f\left( \y + \xh^{k}\right) - \xh^k,
\end{align*}
where $ \xh^{0} = \textbf{0} $. 

The SOS boosting obtains improved denoising performance due to higher SNR of the signal-strengthened image, compared to the noisy input. In order to demonstrate this, let us denote
\begin{align}
\xh = \x + \v_r,
\end{align}
where $ \v_r $ is the error that resides in the outcome $ \xh $, containing both noise residuals and signal errors. Assuming that the denoising algorithm is effective, and $ \xh $ has an improved SNR compared to $ \y $, this means that 
\begin{align}
\frac{\| \x \|}{\| \v_r\|} \gg \frac{\| \x \|}{\| \v \|}
\end{align}
implying
\begin{align}
\label{star}
\| \v_r \| = \delta \| \v \|, \text{   where  } \delta \ll 1.
\end{align}
Thus, referring now to the addition $ \y + \xh $, its SNR satisfies
\begin{align}
\label{SNR2}
\text{SNR}^2(\y+\xh) &= \frac{\|2\x\|^2}{\|\v + \v_r\|^2} \\ \notag 
                       &\geq \frac{4\|\x\|^2}{\|\v\|^2 +2\| \v \| \|\v_r\| + \| \v_r\|^2}  \notag
\end{align}
In the above we used the Cauchy-Shwartz inequality. Using (\ref{star}) we get
\begin{align}
\label{SNR3}
\text{SNR}^2(\y+\xh)  & \geq \frac{4\|\x\|^2}{(1 + \delta)^2\|\v\|^2} \\ \notag
                      & = \frac{4}{(1 + \delta)^2}\text{SNR}^2(\y). \notag
\end{align}
Since $ \delta \ll 1 $, we have that 
\begin{align}
\label{SNR4}
\text{SNR}(\y+\xh) > \text{SNR}(\y),
\end{align}
where in the ideal case ($ \delta=0 $), the relation becomes
\begin{align}
\label{SNR5}
\text{SNR}(\y+\xh) = 2 \cdot \text{SNR}(\y).
\end{align}

\subsection{Convergence Analysis}
\label{convergence}
Studying the convergence of the SOS boosting is done by leveraging the linear matrix formulation of the denoising algorithm. The error of the SOS recursive function
\begin{align}
\label{err}
e_{k} = \xh^{k} - \xh^{*},
\end{align}
is defined as the difference between the $ k^{th} $ estimate,
\begin{align}
\label{x_k}
\xh^{k} = \W_{k}\left(  \y + \xh^{k-1}\right) - \xh^{k-1},
\end{align}
and the outcome that is obtained after a large number iterations, 
\begin{align}
\label{x_*}
\xh^{*} = \W_*\left(  \y + \xh^{*}\right) - \xh^{*},
\end{align}
where $ \W_k $ is a filter matrix, which is equivalent to applying $ f\left( \cdot \right) $ on the signal-strengthened image. Substituting Equations (\ref{x_k}) and (\ref{x_*}) into Equation (\ref{err}) lead to
\begin{align}
\label{err_k}
e_{k}  = & \W_{k}\left(  \y + \xh^{k-1}\right) - \xh^{k-1} -  \left( \W_*\left(  \y + \xh^{*}\right) - \xh^{*}\right)  \\ \notag
       = & \left( \W_k - \W_* \right)\y + \W_k\xh^{k-1} - \W_k\xh^{*} + \W_k\xh^{*} - \W_*\xh^{*}  - \left(\xh^{k-1} - \xh^{*}\right) \\ \notag
       = & \left( \W_k - \W_* \right)\y + \W_ke_{k-1}  + \left( \W_k - \W_*\right) \xh^{*}  - e_{k-1} \\ \notag      
       = &\left( \W_k-\I \right) e_{k-1} + \left( \W_k - \W_* \right)\left( \y + \xh^{*}\right),  \notag
\end{align}
where we use the recursive connection $ e_{k-1} = \xh^{k-1} - \xh^{*} $. We should note that the non-linearity of $ f(\y + \xh^{k-1}) = \W_k(\y + \xh^{k-1}) $ is neglected in the above derivation by allowing an operation of the form $ \W_k(\y + \xh^{k-1}) = \W_k\y + \W_k\xh^{k-1} $.

In the following convergence analysis we shall assume a fixed filter-matrix $ \W $ that operates on the signal-strengthened image along the whole SOS-steps, i.e., $\W = \W_k = \W_* $. This comes up in practice after applying the SOS boosting for a large number of iterations (as explained in the context of Figure \ref{plots}).  In this case, the above-mentioned abuse of the non-linearity becomes correct, and thus the convergence analysis is valid.
\begin{theorem}
\label{convergence_study}
Assume that $ \W = \W_k = \W_*$, and that the spectral radius of the transition matrix $ \|\W-\I\|_2 = \gamma <1 $. The error  $ e_k $ converges exponentially, i.e., $ \|e_{k}\|_2 \leq \|e_0\|_2 \cdot \gamma^k \to 0$ for $ k \to \infty $. Thus, the SOS recursive function is guaranteed to converge.
\end{theorem}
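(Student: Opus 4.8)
The plan is to reduce the computation to the error recursion that has already been worked out in Equation (\ref{err_k}), namely $e_k = (\W_k - \I)e_{k-1} + (\W_k - \W_*)(\y + \xh^*)$, and then specialize it to the stationary regime. First I would invoke the hypothesis $\W = \W_k = \W_*$: the inhomogeneous term $(\W_k - \W_*)(\y + \xh^*)$ vanishes identically, and the recursion collapses to the homogeneous linear form $e_k = (\W - \I)e_{k-1}$. As a preliminary sanity check I would also note that $\xh^*$ is itself well-defined in this regime: the fixed-point relation (\ref{x_*}) rearranges to $(2\I - \W)\xh^* = \W\y$, and since $\W$ has eigenvalues in $(0,1]$ (Theorem \ref{PropertiesofW}, items 2, 6, 7), the matrix $2\I - \W$ has eigenvalues in $[1,2)$ and is invertible, so $\xh^* = (2\I - \W)^{-1}\W\y$ exists and is unique.

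Next I would unroll the homogeneous recursion by a one-line induction on $k$, obtaining $e_k = (\W - \I)^k e_0$. Taking Euclidean norms and applying submultiplicativity of the induced $\ell_2$ operator norm gives $\|e_k\|_2 \le \|(\W - \I)^k\|_2 \, \|e_0\|_2 \le \|\W - \I\|_2^{\,k} \, \|e_0\|_2 = \gamma^k \|e_0\|_2$. Because $\gamma < 1$ by assumption, $\gamma^k \to 0$ as $k \to \infty$, hence $\|e_k\|_2 \to 0$, i.e. $\xh^k \to \xh^*$ geometrically with ratio $\gamma$ — exactly the claimed exponential convergence, with the stated explicit bound.

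Finally I would add a remark that the hypothesis $\gamma < 1$ is not vacuous: for the K-SVD filter, Theorem \ref{PropertiesofW}, item 8, already gives $\|\W - \I\|_2 \le \frac{n}{\mu + n} < 1$, and more generally any row-stochastic positive-definite denoiser matrix in the family of \cite{moderntour} has eigenvalues in $(0,1]$, so $\W - \I$ has eigenvalues in $(-1,0]$ and spectral radius below $1$. The one point deserving a word of care is the identification of the quantity "$\|\W - \I\|_2$" appearing in the statement with what actually controls the iteration: since $\W$ is symmetric (Theorem \ref{PropertiesofW}, item 1), so is $\W - \I$, and therefore its spectral radius, its largest singular value, and its induced $\ell_2$ norm all coincide, which is what makes $\|(\W - \I)^k\|_2 = \|\W - \I\|_2^{\,k}$ hold with equality and the bound sharp. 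I do not expect a genuine obstacle here; the substance of the theorem is entirely contained in passing to the stationary error recursion, which the derivation preceding the statement has already carried out, so the remaining argument is a routine unrolling-and-norm-bounding step.
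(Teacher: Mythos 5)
Your proposal is correct and follows essentially the same route as the paper's proof: set $\W_k=\W_*$ so the inhomogeneous term in Equation (\ref{err_k}) vanishes, unroll $e_k=(\W-\I)^k e_0$, and bound via $\|e_k\|_2\le\|\W-\I\|_2^k\|e_0\|_2=\gamma^k\|e_0\|_2$. The additional remarks on the existence of $\xh^*$ and on why $\gamma<1$ holds for the K-SVD matrix are sensible but not part of the paper's argument.
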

\begin{proof} By assigning $ \W_k = \W_* $, the second term $ \left( \W_k - \W_* \right)\left( \y + \xh^{*}\right) $ in Equation (\ref{err_k}) vanishes, thus
\begin{align}
\label{err_*}
e_{k}   & = \left( \W-\I \right) e_{k-1} \\ \notag
		& = \left( \W-\I \right)^k  e_{0} \notag
\end{align}
where $ e_{0} = \xh^{0} - \xh^{*} = - \xh^{*} $ is a constant vector. Using matrix-norm inequalities we get
\begin{align}
\label{err_norm}
\|e_{k}\|_2   & \leq  \| \W-\I \|_2^k \cdot \|  e_{0} \|_2 \\ \notag
              & = \gamma^{k} \cdot \|  e_{0} \|_2, \notag
\end{align}
where we use $ \| \W-\I \|_2 = \gamma $. As a result, $ \| e_{k} \|_2 $ is bounded by $ \gamma^{k} \cdot \|  e_{0} \|_2 $ and approaches zero for \mbox{$ k \to \infty $} when $ \gamma < 1 $.
\end{proof}

As such, the SOS boosting is guaranteed to converge for a wide range of denoising algorithms -- the ones that can be formulated/approximated such that $ \W-\I $ is convergent, e.g., the K-SVD \cite{KSVD_REF1}, NLM \cite{NL_DENOISE_REF4}, Bilateral filter \cite{tomasi1998bilateral} and LARK \cite{chatterjee2012patch}. In the next sub-section we soften the convergence requirements and intensify its properties, along with a practical demonstration.

\subsection{Parametrization}
\label{parametrization}
We generalize the SOS boosting algorithm by introducing two parameters that modify the steady-state outcome, the requirements for convergence (the eigenvalues range) and its rate. Starting with the first parameter, $ \rho $, which controls the signal emphasis, the formulation proposed is:
\begin{align}
\label{rho_sos}
\xh^{k+1} = f\left( \y + \rho\xh^{k}\right) - \rho\xh^k,
\end{align}
where a large value of $ \rho $ implies a strong emphasis of the underlying signal. Assigning $ \xh^{k+1} = \xh^{k} = \xh^{*}$ and replacing $ f\left( \cdot \right)  $ with a fixed filter-matrix $ \W_{*} $ lead to
\begin{align}
\xh^{*} = \W_*\left( \y + \rho\xh^{*}\right) - \rho\xh^{*},
\end{align}
which implies a steady-state result
\begin{align}
\label{closed_form_rho}
\xh^{*} = \left( \I + \rho(\I - \W_*) \right)^{-1}\W_*\y.
\end{align}
This is the new steady-state outcome, obtained only if the SOS boosting converges. We should note that this outcome also minimizes a cost function that involves the graph Laplacian as a regularizer (see Section \ref{graph}for further details). The conditions for convergence are studied hereafter.

The second parameter, $ \tau $, modifies the eigenvalues of the error's transition matrix, thereby leading to a faster convergence and relaxing the requirement that only $ f(\cdot) $ with eigenvalues between 0 to 1 is guaranteed to converge. We introduce this parameter in such a way that it will not affect that steady-state outcome (at least as far as the linear approximation is concerned). We start with the steady-state relation
\begin{align}
\label{sos_steady1}
\xh^{*} = f\left( \y + \rho\xh^{*}\right) - \rho\xh^*.
\end{align}
We multiply both sides by $ \tau $ and add the term $ \xh^*-\xh^* $ to the RHS, 
\begin{align}
\label{sos_steady2}
\tau\xh^{*} = \tau f\left( \y + \rho\xh^{*}\right) - \tau \rho\xh^* + \xh^{*} - \xh^{*}.
\end{align}
Thus, the same $ \xh^* $ solving (\ref{sos_steady1}) will also solve (\ref{sos_steady2}) and thus the steady-state is not affected. 
Rearranging this equality leads to 
\begin{align}
\label{sos_steady3}
\xh^{*} = \tau f\left( \y + \rho\xh^{*}\right) - (\tau\rho + \tau -1 )\xh^*.
\end{align}
As a result, the proposed generalized SOS boosting is given by
\begin{align}
\label{full_sos}
\xh^{k+1} = \tau f\left( \y + \rho\xh^{k}\right) - (\tau\rho + \tau -1 )\xh^k.
\end{align}
It is important to note that although $ \tau $ does not affect $ \xh^{*} $ explicitly, it may modify the estimates $ \xh^{k} $ over the iterations.  Due to the adaptivity of $ f(\cdot) $ to its input, such modifications may eventually affect the steady-state outcome.

Studying the convergence of Equation (\ref{full_sos}) is done in the same way that was taken in Section \ref{convergence}. Starting with the error computation, expressed by
\begin{align}
\label{err_k_full}
e_{k}  = & \xh^{k} - \xh^{*} \\ \notag  
       = & \tau\W_{k}\left(  \y + \rho\xh^{k-1}\right) - (\tau\rho + \tau -1 )\xh^{k-1} - \left( \tau\W_*\left(  \y + \rho\xh^{*}\right) - (\tau\rho + \tau -1 )\xh^{*} \right) \\ \notag
       = & \tau\left( \W_{k}- \W_* \right)\y + \tau\rho\W_{k}\xh^{k-1} - \tau\rho\W_k\xh^{*} + \tau\rho\W_k\xh^{*} - \tau\rho\W_*\xh^{*}  - (\tau\rho + \tau -1 )e_{k-1} \\ \notag
       = & \left( \tau\rho\W_k-(\tau\rho +\tau-1)\I \right) e_{k-1}  + \tau\left( \W_k - \W_* \right)\left( \y + \rho\xh^{*}\right).  \notag
\end{align}
Next, following Theorem \ref{convergence_study}, and assuming $ \W = \W_k = \W_* $, we get that the condition for convergence is:
\begin{align}
\label{tau_rho_condition}
\forall i \hspace{1em}  \phi(\tau, \rho, \lambda_i) = | \tau\rho\lambda_i - (\tau\rho + \tau) + 1 | < 1,
\end{align}
where $ \{ \lambda_i \}_{i=1}^{N} $ and $\{ \phi(\tau,\rho,\lambda_i) \}_{i=1}^{N}$ are the eigenvalues of $ \W $ and the error's transition matrix, respectively.
In order to achieve the fastest convergence, we seek for the parameter $ \tau^{*} $ that minimizes
\begin{align}
\label{best_tau_rho}
\tau^{*} = \min_{\tau} \max_{1 \le i \le N} \phi(\tau, \rho, \lambda_i) \hspace{0.7em} \text{s.t.} \hspace{0.7em} \forall i  \hspace{0.5em} \phi(\tau, \rho, \lambda_i)<1.
\end{align}
Given $ \tau^{*} $, the rate-of-convergence is governed by
\begin{align}
\label{best_gamma}
\gamma^{*} = \max_{1 \le i \le N} \phi(\tau^{*}, \rho, \lambda_i).
\end{align}
Appendix \ref{best_params} provides the following closed-form solution for Equation (\ref{best_tau_rho}),
\begin{align}
\label{tau_star}
\tau^{*} =  \frac{2}{2(\rho+1)-\rho(\lambda_{min} + \lambda_{max})},
\end{align}
along with optimal convergence rate,
\begin{align}
\label{gamma_star}
\gamma^{*} =  \frac{\rho(\lambda_{max}-\lambda_{min})}{2(\rho+1)-\rho(\lambda_{min} + \lambda_{max})}.
\end{align}

\begin{figure}[tbp]
\centering
\vspace{-0.2cm}
\mbox{ \subfigure[$ \log_{10}(\|e_k\|_2) $]{\epsfig{figure=./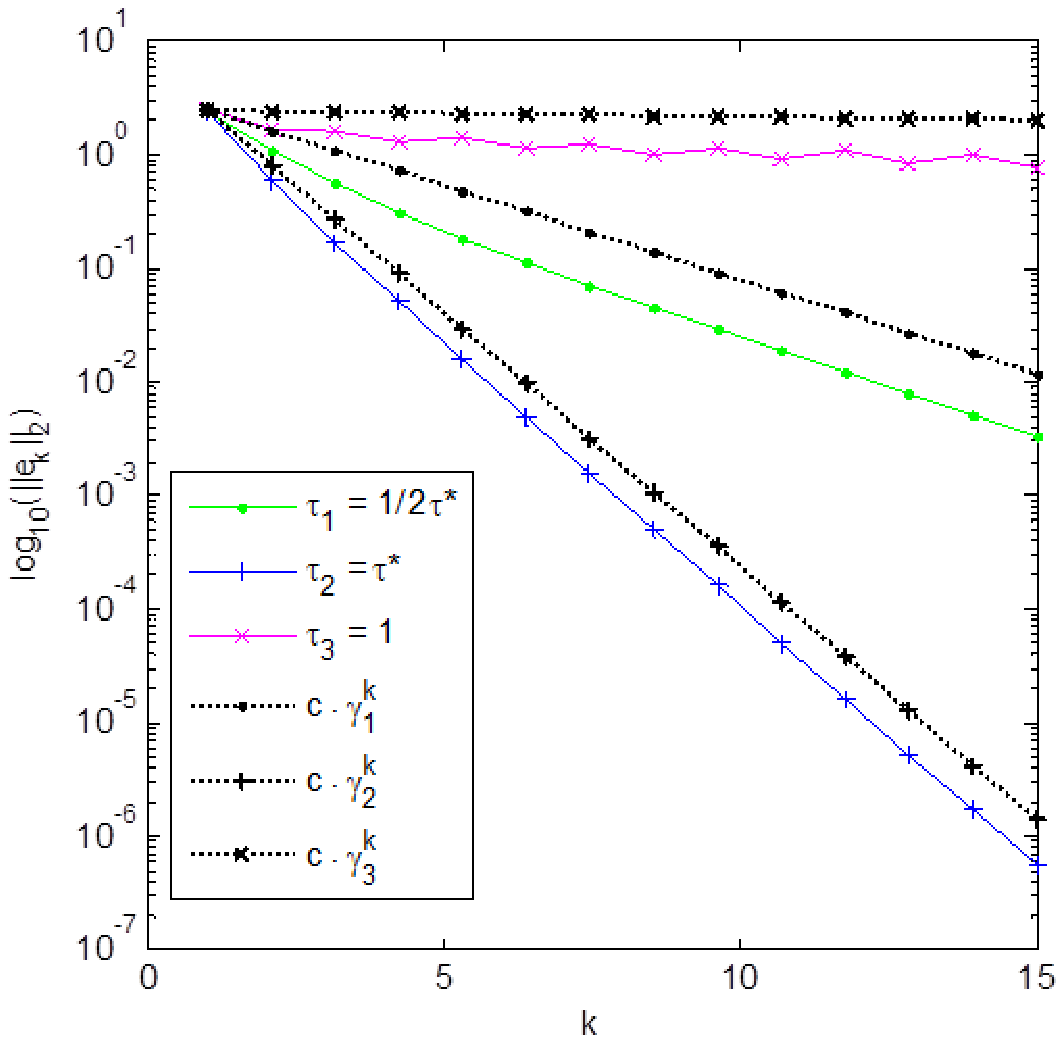, height = 1.9in} \label{plots::conv}}\quad 
 \subfigure[PSNR of $ \xh^k$]{\epsfig{figure = ./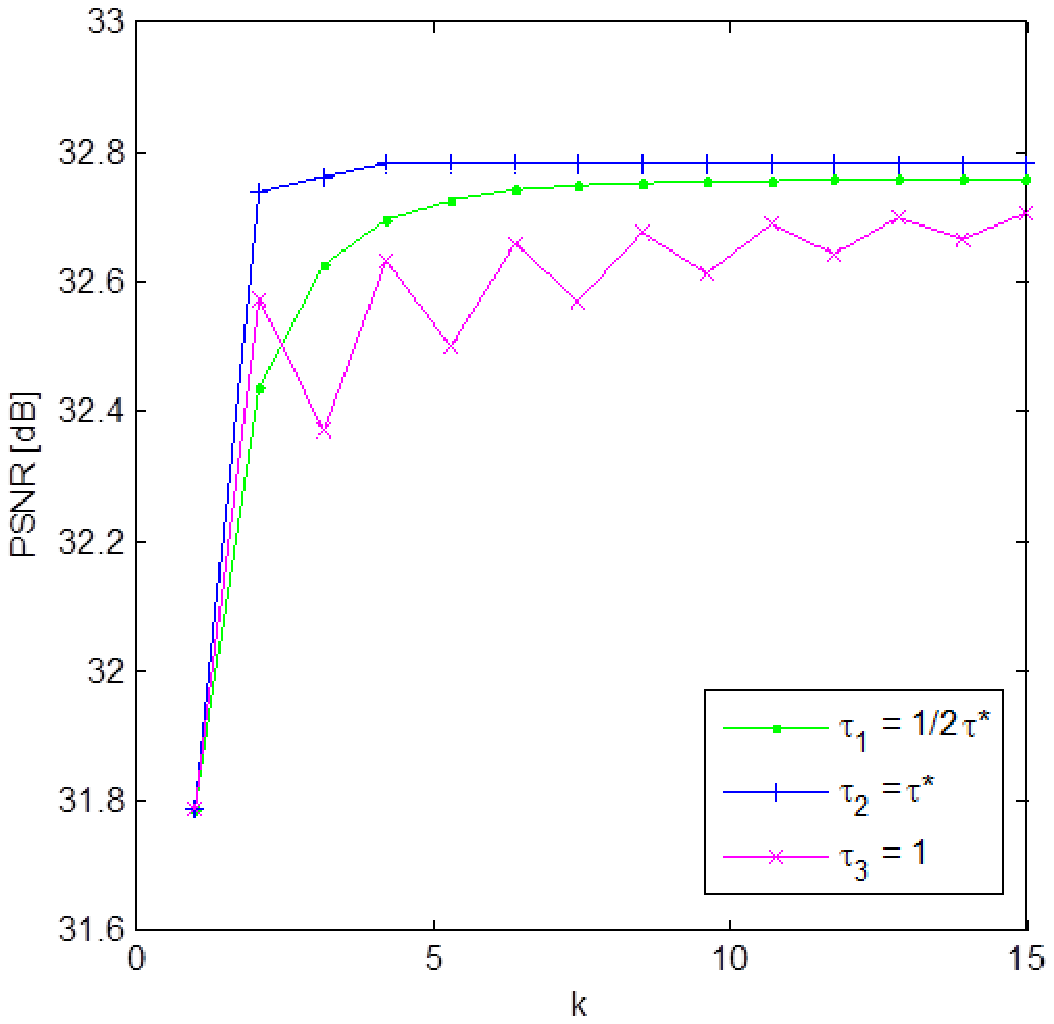, height = 1.9in} \label{plots::psnr}}}
 \vspace{-0.2cm}
\caption{Illustration of the generalized SOS recursive function properties: (a) convergence and (b) PSNR improvement. These graphs are generated by operating the K-SVD denoising \cite{KSVD_REF1} on noisy ($ \sigma = 25 $) \textsf{House} image.}
\label{plots}
\vspace{-0.5cm}
\end{figure}

In the context of the K-SVD image denoising \cite{KSVD_REF1}, Figure \ref{plots} demonstrates the properties of the generalized SOS recursive function for the image \textsf{House}, corrupted by zero-mean Gaussian noise with $ \sigma = 25 $. Each K-SVD operation includes 5 iterations of sparse-coding and dictionary-update with noise level of $ \hhsigma = 1.05\sigma $. We repeat these operations for 100 SOS-steps and set $ \xh^*=\xh^{100} $. In the following experiment, the denoised images are the outcome of $ \xh^{k+1} = \W(\y+\xh^{k}) - \xh^{k} $, where $ \W $ is held fixed as $ \W = \W_{100} $, initializing with $ \xh_0=\textbf{0} $. 

According to Theorem \ref{PropertiesofW} and based on the original K-SVD parameters ($ n = 8 $ , $m = 256$, $\mu = 1.02 $), we get $ \lambda_{min} \geq 0.015$ and $ \lambda_{max} = 1 $. These are leading to $ \tau^{*} = 0.67 $ (see Equation (\ref{tau_star})). Figure \ref{plots::conv} plots the logarithm of $ \|e_k\|_2 $ for $ [\tau_1, \tau_2, \tau_3] = [\frac{1}{2}\tau^{*}, \tau^{*}, 1] $. As can be seen, the error norm decreases linearly, and bounded by $ c\cdot \gamma_i^k $, where $[\gamma_1, \gamma_2, \gamma_3] = [0.66, 0.33, 0.98] $. The fastest convergence is obtained for $ \tau_2 = \tau^{*} = 0.67 $ with $ \gamma_2 =\gamma^{*}=0.33 $. While the slowest one is obtained for $ \tau_3=1 $ with $ \gamma_3 = 0.98 $. 

Figure \ref{plots::psnr} demonstrates the PSNR improvement (the higher the better) as a function of the SOS-step. As can be seen, faster convergence of $ \|e_k\|_2 $ translates well into faster improvement of the final image. The SOS boosting achieves PSNR of $ 32.78 $dB, offering an impressive improvement over the original K-SVD algorithm that obtains $ 31.8 $dB.

\section{Local-Global Interpretation}
\label{disagreement_algo}

As described in Section \ref{intro}, there is a stubborn gap between the local processing of image patches and the global need (creating a final image by aggregating the patches). Consider a denoising scenario based on overlapping patches (e.g. \cite{KSVD_REF1,PLE_REF}): At the local processing stage, each patch is denoised independently\footnote{Note that in our terminology, even methods like BM3D \cite{BM3D_REF} are considered as local, even though they share information between groups of patches. Indeed, our discussion covers this and related methods as well. In a way, the approach taken in \cite{ram2013image} offers some sort of remedy to the BM3D method.} without any influence from neighboring patches. Then, a global stage merges these outcomes by plainly averaging the local denoising results. 

Inspired by game-theory ideas, in particular the ''consensus and sharing'' optimization problem \cite{boyd2011distributed}, we introduce an interesting local-global interpretation to the above-proposed SOS boosting algorithm. A game theoretical terminology of a patch-based processing can be viewed as the following: There are several agents, where each one of them adjusts its local variable to minimize its individual cost (in our case -- representing the noisy patch sparsely). In addition, there is a shared objective term (the global image) that describes the overall goal. Imitating this concept, we name the following SOS interpretation as ''sharing the disagreement''. This approach, reported in \cite{romanosharing}, reduces the local-global gap by encouraging the overlapping patches to reach an agreement before they merge their forces by the averaging. 

The proposed boosting algorithm reduces the local-global gap in the following way. Per each patch, we define the difference between the local (intermediate) result and the patch from the global outcome as a ''disagreement''. Since each patch is denoised independently, such disagreement is almost always non-zero and even substantial. Sharing the information between the overlapping patches is done by subtracting the disagreement from the noisy image patches, i.e., seeking for an agreement between them. These modified patches are the new inputs to the denoising algorithm. In this way we push the overlapping patches to share their local results, influence each other and reduce the local-global gap.
 
More specifically, given an initial denoised version of $ \y $ and its intermediate patch results, we suggest repeating the following procedure: (i) compute the disagreement per each patch, (ii) subtract the result from the noisy input patches, (iii) apply the denoising algorithm to these modified patches, and (iv) reconstruct the image by averaging on the overlaps. Focusing on the K-SVD image denoising, this procedure is detailed in Algorithm \ref{consensus_algo}.
\vspace{-0.2cm}
\begin{algorithm}[H]
	\caption{\text{: Sharing the disagreement approach \cite{romanosharing}.}}
	\renewcommand{\algorithmicrequire}{\textbf{Initialization:}}
    \label{consensus_algo}
\begin{algorithmic}[1] 
\REQUIRE
\STATE	$ \D^0 \in \RR^{n \times m} $ -- initial dictionary.
\STATE	Set $ k=0 $.
\STATE 	Set $ \q_{i}^{0}=0 $, where $ \q_{i} \in \RR^n $ is a ''disagreement'' patch, corresponding to the $ i^{th} $ patch in the image.  
\end{algorithmic}
\renewcommand{\algorithmicrequire}{\textbf{Repeat}}
\begin{algorithmic}[1]
\REQUIRE
\STATE \textbf{Sparse-Coding and Dictionary Update:} Solve
\begin{align}
\label{con_min}
\left[ {\D^{k+1}}, \{\hat \alpha_i^{k+1}\}_{i=1}^{N} \right]  =
\min_{\D, \{\alpha_i\}_{i=1}^{N}}  \sum_{i=1}^{N}  \gamma_i\|\alpha_i\|_0 + \|\D\alpha_i - (\R_{i}\y - \q_{i}^k)\|_2^2.
\end{align}
In practice we approximate the representation of $ \R_{i}\y - \q_{i}^k $ using the OMP \cite{OMP_REF} and update the previous dictionary $ \D^k $ using the K-SVD algorithm \cite{KSVD_REF2}.
\STATE \textbf{Image Reconstruction:} Solve 
\begin{align}
\xh^{k+1} = \min_{z} \sum_{i} {\Arrowvert\D^{k+1} \alpha_{i}^{k+1}-\R_{i}z\Arrowvert_{2}^{2}}. 
\end{align}
This term leads to a simple averaging of the denoised patches $ \D^{k+1}\alpha_{i}^{k+1} $ on the overlaps.
\STATE \textbf{Disagreement Computation:} Update 
\begin{align}
\q_{i}^{k+1} = \D^{k+1}\alpha_{i}^{k+1} - \R_{i}\xh^{k+1},
\end{align} 
where $ \q_{i}^{k+1} $ is the ''disagreement'' between the independent estimation $ \D^{k+1}\alpha_{i}^{k+1} $ and the corresponding patch from the global outcome $ \R_{i}\xh^{k+1} $.
\end{algorithmic}
\renewcommand{\algorithmicrequire}{\textbf{Until}}
\begin{algorithmic}
\REQUIRE
\STATE Maximum denoising quality has been reached, else increment $ k $ and return to ''Sparse-Coding and Dictionary Update''. 
\end{algorithmic}
\renewcommand{\algorithmicrequire}{\textbf{Output}}
\begin{algorithmic}
\REQUIRE
\STATE $ \xh^{*} $ -- the last iteration result. 
\end{algorithmic}
\end{algorithm}

The modified input patches contain their neighbors information, thus encouraging the locally denoised patches to agree on the global result. Substituting $ \q_i^{k} =  \D^{k}\alpha_{i}^{k} - \R_{i}\xh^{k}$ in Equation (\ref{con_min}) leads to
\begin{align}
&\left[ {\D^{k+1}}, \{\hat \alpha_i^{k+1}\}_{i=1}^{N} \right]  =  \min_{\D, \{\alpha_i\}_{i=1}^{N}}  \sum_{i=1}^{N} \gamma_i\|\alpha_i\|_0 + \| \D\alpha_i - (\R_i\y - \D^{k}\alpha_{i}^{k} + \R_{i}\xh^{k})\|_2^2.
\end{align}
Now, by denoting the \emph{local} residual (method-noise) as \mbox{$ \r_i = \R_{i}\y - \D^{k}\alpha_{i}^{k} $}, we get
\begin{align}
\left[ {\D^{k+1}}, \{\hat \alpha_i^{k+1}\}_{i=1}^{N} \right]  = 
\min_{\D, \{\alpha_i\}_{i=1}^{N}} \sum_{i=1}^{N} \gamma_i\|\alpha_i\|_0 + \| \D\alpha_i - (\R_{i}\xh^{k} + \r_i)\|_2^2,
\end{align}
where the representation $ \D\alpha_i $ is the denoised version of the patch $ \R_{i}\xh^{k} + \r_i $. In this formulation, the input to the K-SVD is a patch from the global (previous iteration) cleaned image $ \R_{i}\xh^{k} $, contaminated by its own local method-noise $ \r_i $. Notice the major differences between Equation (\ref{twicing_algo}) that denoises the method-noise, Equation (\ref{bregman_algo}) that adds the method-noise to the noisy image and then denoises the result, and our \emph{local} approach that aims at recovering the previous global estimation, thereby leading to an agreement between the patches. Our algorithm is also different from the EPLL \cite{zoran2011learning}, which denoises the previous cleaned image without considering its method-noise. 

Still in the context of the K-SVD, Appendix \ref{sos_and_disagree} shows, under some assumptions, an \emph{equivalence} between the SOS recursive function (Equation (\ref{sos_algo})) and the above ''sharing the disagreement'' algorithm. It is important to emphasize that the former treats the K-SVD as a ''black-box'', thereby being blind to the K-SVD intermediate results (the independent denoised patches, before the patch averaging step). On the contrary, in the case of the disagreement approach, these intermediate results are crucial -- they are central in the algorithm. Therefore, the connection between the SOS and the disagreement algorithms is far from trivial.

\section{Graph Laplacian Interpretation}
\label{graph}
In this section we present a graph-based analysis to the SOS boosting. We start by providing a brief background on graph representation of an image in the context of denoising. Second, we explore the graph Laplacian regularization in general, and in the context of Equation (\ref{closed_form_rho}), the steady-state outcome of the SOS boosting. Finally, we suggest novel recursive algorithms (that treat the denoiser as a ''black-box'') to the graph Laplacian regularizers that are described in \cite{elmoataz2008nonlocal,bougleux2009local,PeymanLaplaceDenoising,PeymanLaplace}.    

Recent works \cite{gilboa2007nonlocal,gilboa2008nonlocal,elmoataz2008nonlocal,bougleux2009local,shuman2013emerging,liu2014progressive,symm,PeymanLaplaceDenoising,PeymanLaplace} suggest representing an image as a weighted graph $ \mathcal{G} = \left( {\boldsymbol{V}}, \boldsymbol{E}, \boldsymbol{K}\right) $, where the vertices $ \boldsymbol{V} $ represent the image pixels, the edges $ \boldsymbol{E} \subseteq \boldsymbol{V} \times \boldsymbol{V} $ represent the connection/similarity between pairs of pixels, with a corresponding weight $ \boldsymbol{K}(i,j) $. 

A constructive approach for composing a graph Laplacian for an image is via image denoising algorithms. Given a denoising process for an image, which can be represented as a matrix multiplication, $ {\xh} = \W \y $, 
one can refer to the entry $ (i,j) $ as revealing information about the proximity between the $ i $-th and $ j $-th pixels. We note that the existence of the matrix $ \W $ does not imply that the denoising process is linear. Rather, the non-linearity is hidden within the construction of the entries of $ \W $. For example, in the case of the NLM \cite{NL_DENOISE_REF4}, Bilateral \cite{tomasi1998bilateral} and LARK \cite{chatterjee2012patch} filters, the entries of $ \boldsymbol{K} $ can be expressed by
\begin{align}
\boldsymbol{K}(i,j) = \exp\left( -\frac{d^2(i,j)}{h^2}\right), 
\end{align}
where $ d(i,j) $ measures the distance between the $ (i,j) $ pixels (or patches), and $ h $ is a smoothing parameter. Notice that in the case of the sparsity-based K-SVD denoising \cite{KSVD_REF1}, the weights $ \boldsymbol{K}(i,j) $, as defined in Equation (\ref{denosied_image_filtered}), measure the similarity between the $ (i,j) $ pixels through the dictionary $ \D $. Dealing with an undirected graph $ \mathcal{G} $, the degree $ d_i $ of the vertex $ \boldsymbol{V}_i $ can be defined by 
\begin{align}
\label{degree}
\boldsymbol{D}(i,i) = d_i  = \sum_{j}\boldsymbol{K}(i,j),
\end{align}
where $ d_i $ is a sum over the weights on the edges that are connected to $ \boldsymbol{V}_i $, and $ \boldsymbol{D} $ is a diagonal matrix (called the degree matrix), containing the values of $ \{ d_i \}_{i=1}^{N}$ in its diagonal\footnote{The K-SVD degree matrix, as defined in Equation (\ref{denosied_image_filtered}), also holds the relation described in Equation (\ref{degree}). According to Theorem \ref{PropertiesofW}, $ \one $ is an eigenvector of $ \W=\boldsymbol{D}^{-1}\boldsymbol{K} $, corresponding to eigenvalue $ \lambda=1 $, leading to $ \boldsymbol{D}^{-1}\boldsymbol{K}\one = \one $. Multiplying both sides by $ \boldsymbol{D} $ results in the desired relation $ \boldsymbol{K}\one = \boldsymbol{D}\one $, i.e., $ \boldsymbol{D}(i,i)  = \sum_{j}\boldsymbol{K}(i,j) $.}.

The graph Laplacian has a major importance in describing functions on a graph \cite{von2007tutorial}, and in the case of image denoising -- representing the structure of the underlying signal \cite{elmoataz2008nonlocal,bougleux2009local,liu2014progressive,symm,PeymanLaplace}. There are several definitions of the graph Laplacian. In the context of the proposed SOS boosting, we shall use a normalized Laplacian, defined as
\begin{align}
\mathcal{L} = \I-\W,
\end{align}
where $ \W $ is a filter matrix, representing the denoiser $ f(\cdot) $ (see Equation (\ref{denosied_image_filtered})). Note that $ \W $ is a normalized version of the similarity matrix $ \boldsymbol{K} $, thus has eigenvalues in a range of 0 to 1. There are several ways to obtain $ \W $ from $ \boldsymbol{K} $, e.g., $ \W=\boldsymbol{D}^{-1}\boldsymbol{K} $ is used in \cite{szlam2008regularization} and in this work (leading to a random walk Laplacian), another way is $ \W=\boldsymbol{D}^{-1/2}\boldsymbol{K}\boldsymbol{D}^{-1/2} $ as used in \cite{meyer2014perturbation}. Recently, Kheradmand and Milanfar \cite{PeymanLaplace} suggest $ \W=\boldsymbol{C}^{-1/2}\boldsymbol{K}\boldsymbol{C}^{-1/2} $, where $ \boldsymbol{C} $ is the outcome of Sinkhorn algorithm \cite{knight2012fast}. Notice that different versions of $ \W $ result in different properties of $ \mathcal{L} $ (refer to \cite{PeymanLaplace} for more information).

In general, the spectrum of a graph is defined by the eigenvectors and eigenvalues of $ \mathcal{L} $. In the context of image denoising, as argued in \cite{gadde2013bilateral,meyer2014perturbation,symm,PeymanLaplace}, the eigenvectors that correspond to the small eigenvalues of $ \mathcal{L} $ encapsulate the underlying structure of the image. On the other hand, the eigenvectors that correspond to the large eigenvalues mostly represent the noise. Meyer et al. \cite{meyer2014perturbation} showed that the small eigenvalues are stable even for high noise scenarios. As a result, the graph Laplacian can be used as a regularizer, preserving the geometry of the image by encouraging similar pixels to remain similar in the final estimate \cite{elmoataz2008nonlocal,bougleux2009local}. 

What can we do with $ \mathcal{L} $? The obvious usage of it is as an image adaptive regularizer in inverse problems. There are several ways to integrate $ \mathcal{L} $ in a cost function, for example, \cite{elmoataz2008nonlocal,bougleux2009local} suggest solving the following minimization problem\footnote{The work in \cite{symm} is closely related, but their regularization term is $\|\mathbf{L} \x\|_2^2$, and thus it leads to $ \mathbf{L}^T\mathbf{L} $ in the steady-state formula, where $ \mathbf{L} = \boldsymbol{D} - \boldsymbol{K}$ is an un-normalized graph Laplacian. Thus, we omit it from the next discussion.} 
\begin{align}
\label{laplacian_reg}
\xh  = \min_{\x} {\Arrowvert \x-\y \Arrowvert}_2^2  + \rho\x^T\mathcal{L}\x,
\end{align}
leading to a closed-form expression for $ \xh $,
\begin{align}
\label{laplacian_reg_closed}
\xh  = (\I + \rho\mathcal{L})^{-1}\y.
\end{align}
The authors of \cite{PeymanLaplace} suggest an iterative graph-based framework for image restoration. Specifically, in the case of image denoising \cite{PeymanLaplaceDenoising}, they suggest a variant to Equation (\ref{laplacian_reg}),
\begin{align}
\label{laplacian_reg_peyman}
\xh  = \min_{\x} { (\x-\y)^T\W(\x-\y)  + \rho\x^T\mathcal{L}\x}.
\end{align}
Differently from Equation (\ref{laplacian_reg}), the above expression offers a weighted data fidelity term, resulting in the following closed-form expression to the final estimate:
\begin{align}
\label{laplacian_reg_closed_peyman}
\xh  = (\W + \rho\mathcal{L})^{-1}\W\y.
\end{align}
It turns out that Equation (\ref{closed_form_rho}), the steady-state result of the SOS boosting, i.e., 
\begin{align}
\label{closed_form_rho_lap}
\xh^* & = \left( \I + \rho(\I - \W_*) \right)^{-1}\W_*\y \\ \notag
    & = \left( \I + \rho \mathcal{L}_*\right)^{-1}\W_*\y,
\end{align}
can be also treated as emerging from a graph Laplacian regularizer, being the outcome of the following cost function
\begin{align}
\label{laplacian_reg_SOS}
\xh^*  = \min_{\x} {\Arrowvert \x-\W_*\y \Arrowvert}_2^2  + \rho\x^T\mathcal{L}_*\x.
\end{align}
Notice the differences between Equations (\ref{laplacian_reg}), (\ref{laplacian_reg_peyman}), and (\ref{laplacian_reg_SOS}). The last expression suggests that SOS aims to find an image that is close to the estimated image $ \W_*\y $, rather than the noisy $ \y $  itself. In the spirit of the SOS boosting,
\begin{align}
\xh^{k+1} = f\left( \y + \rho\xh^{k}\right) - \rho\xh^k, \notag
\end{align}
we can suggest expressing the above-mentioned graph Laplacian regularization methods, i.e., Equations (\ref{laplacian_reg_closed}) and (\ref{laplacian_reg_closed_peyman}), as recursive, providing novel ''black-box'' iterative algorithms that minimize their corresponding penalty functions without explicitly building the matrix $ \W $. Starting with Equation (\ref{laplacian_reg_closed}), the steady-state outcome should satisfy
\begin{align}
\label{steady_state_lap}
\left( \I + \rho(\I - \W) \right)\xh = \y.
\end{align}
There are many ways to rearrange this expression using the fixed point strategy, in order to get a recursive update formula. We shall adopt a path that leads to an iterative process that operates on the strengthened image, $ \y+\xh^k $, in order to expose the similarities and differences to our scheme. Therefore, we suggest adding $ \W\y - \W\y $ to the RHS, i.e.,
\begin{align}
\label{steady_state_lap_add}
\xh + \rho\xh - \rho\W\xh = \y + \W\y - \W\y.
\end{align}
Rearranging the above expression results in
\begin{align}
\label{steady_state_lap_add_rearange}
\xh = \frac{1}{(1+\rho)}\left[ \W \left(\y + \rho\xh \right) + \left( \y - \W\y \right)\right] .
\end{align}
As a consequence, the obtained iterative ''black-box'' formulation to the conventional graph Laplacian regularization \cite{elmoataz2008nonlocal,bougleux2009local} is given by
\begin{align}
\label{lap_iterative_func}
\xh^{k+1} = \frac{1}{(1+\rho)}\left[ f \left(\y + \rho\xh^{k} \right) + \left( \y - f\left(\y\right) \right)\right].
\end{align}
As can be seen, we got an iterative algorithm that, similar to SOS, operates on the strengthened image. However, rather than simply subtracting $ \rho\xh^k $ from the outcome, we add the method noise, and then normalize.

In a similar way, Equation (\ref{laplacian_reg_closed_peyman}), which is formulated as
\begin{align}
\label{steady_state_lap_Peyman}
\left( \W + \rho(\I - \W) \right)\xh = \W\y,
\end{align}
can be expressed by
\begin{align}
\label{steady_state_lap_Peyman_rearange}
\xh = \frac{1}{\rho}\W(\rho\xh + \y -\xh),
\end{align}
and in the general case, the ''black-box'' version of \cite{PeymanLaplaceDenoising} is formulated by
\begin{align}
\label{steady_state_lap_Peyman_func}
\xh^{k+1} = \frac{1}{\rho}f(\rho\xh^{k} + \y - \xh^{k}) =\frac{1}{\rho'+1} f(y+\rho' \xh^k),
\end{align}
where $ \rho'=\rho-1 $. Again, we see a close resemblance to our SOS method. However, instead of subtracting $ \rho\xh^k $ from the denoised strengthened image, we simply normalize accordingly. 

Equations (\ref{lap_iterative_func}) and (\ref{steady_state_lap_Peyman_func}) offer two iterative algorithms that are essentially minimizing the penalty functions (\ref{laplacian_reg}) and (\ref{laplacian_reg_peyman}), respectively. However, these algorithms offer far more -- both can be applied with the denoiser as a ''black-box'', implying that no explicit matrix construction of $ \W $ (nor $ \mathcal{L} $) is required. Furthermore, these schemes, given in the form of denoising on the strengthened image, imply that  parameter setting is trivial -- the noise level is nearly $ \sigma $, regardless of the iteration number. Lastly, an update of $ \W $ within the iterations of these recursive formulas seems most natural.

\section{Experimental Results}
\label{experiments}

\begin{figure}[hbp]
\centering
\mbox{\subfigure[\textsf{Foreman}]{\epsfig{figure=./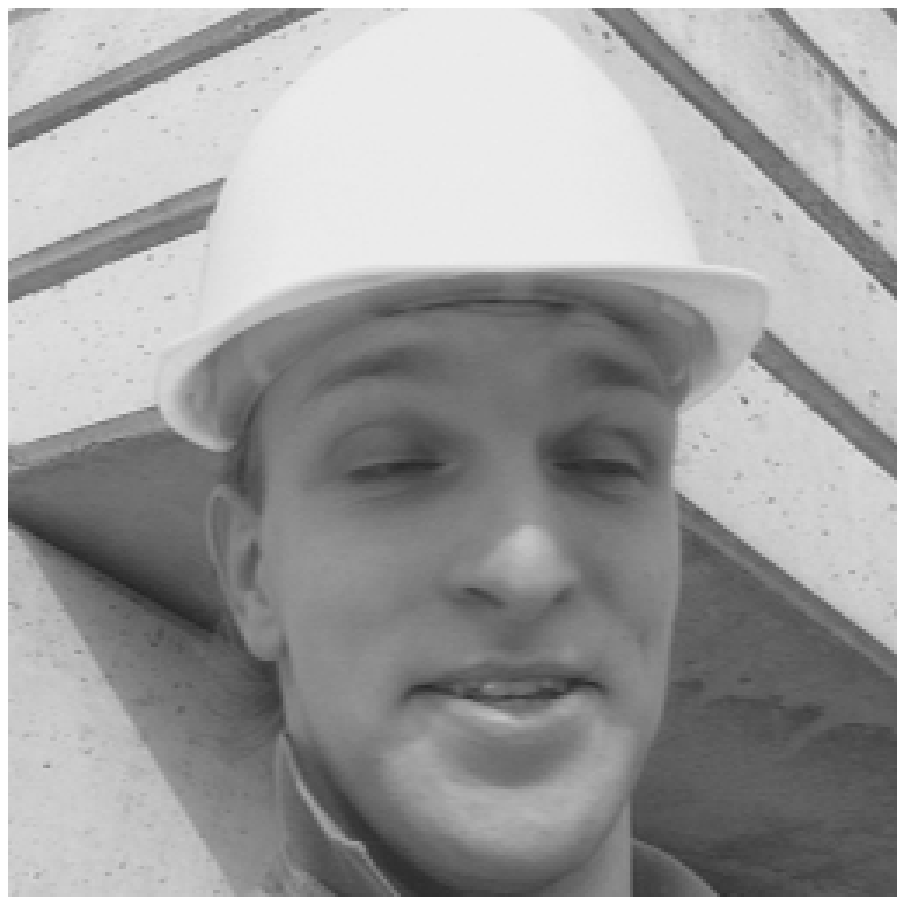, width = 1.175in}} 
 \subfigure[\textsf{Lena}]{\epsfig{figure = ./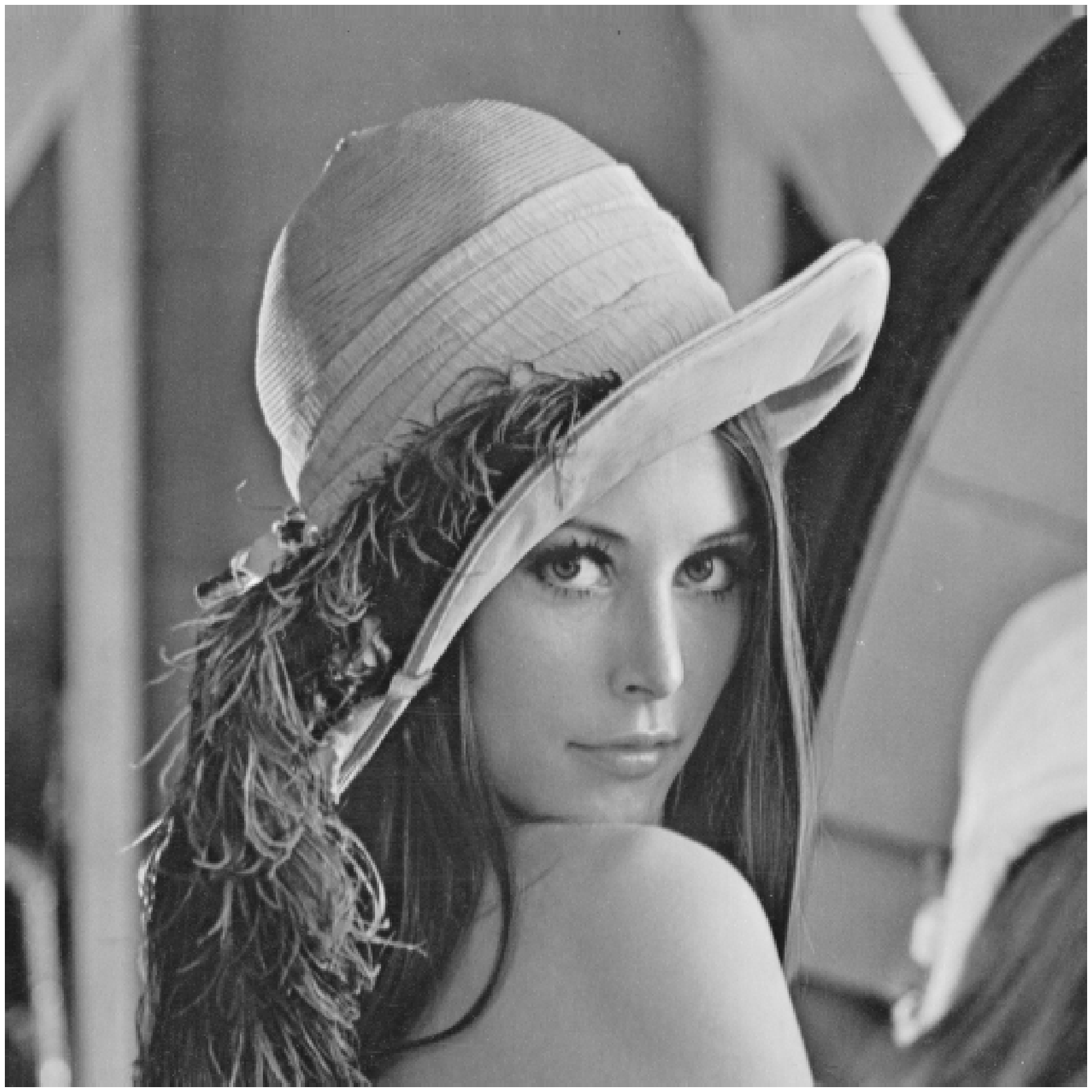, width = 1.175in}}
 \subfigure[\textsf{House}]{\epsfig{figure=./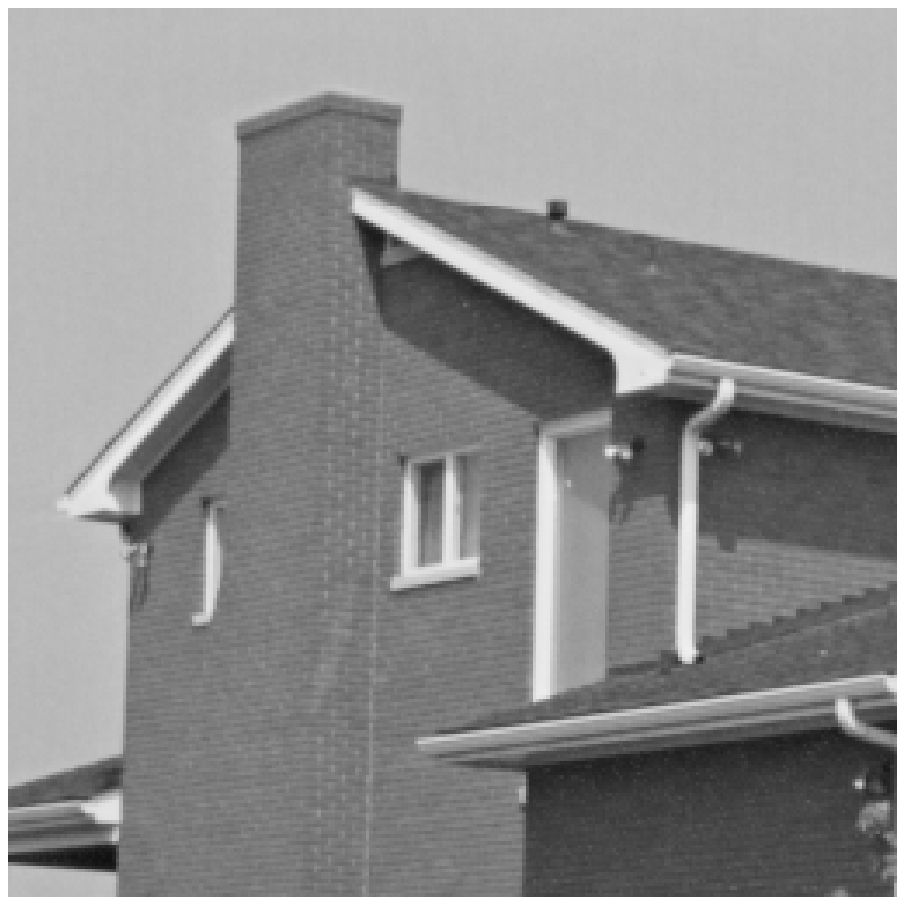, width = 1.175in}} 
 \subfigure[\textsf{Fingerprint}]{\epsfig{figure=./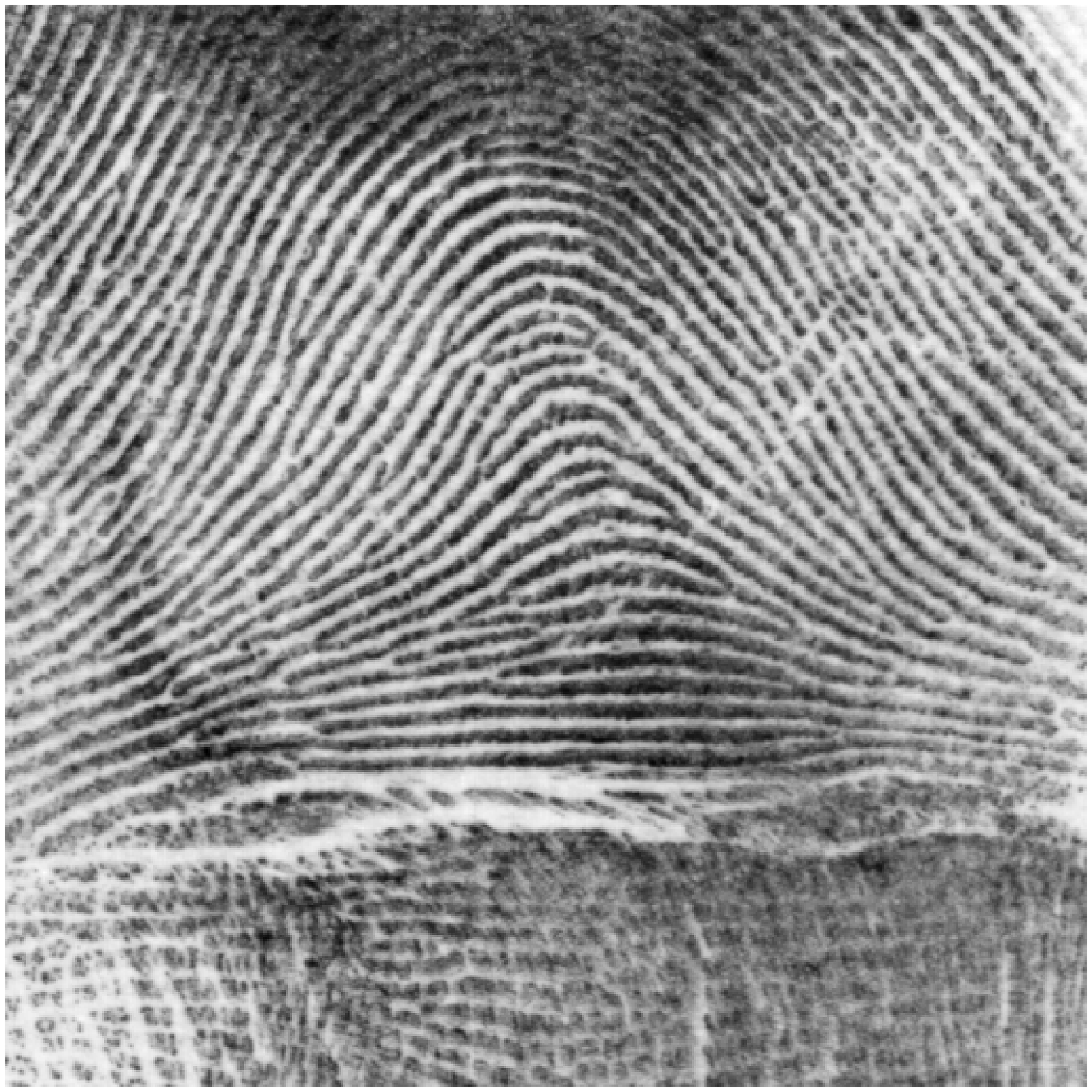, width = 1.175in}} 
 \subfigure[\textsf{Peppers}]{\epsfig{figure=./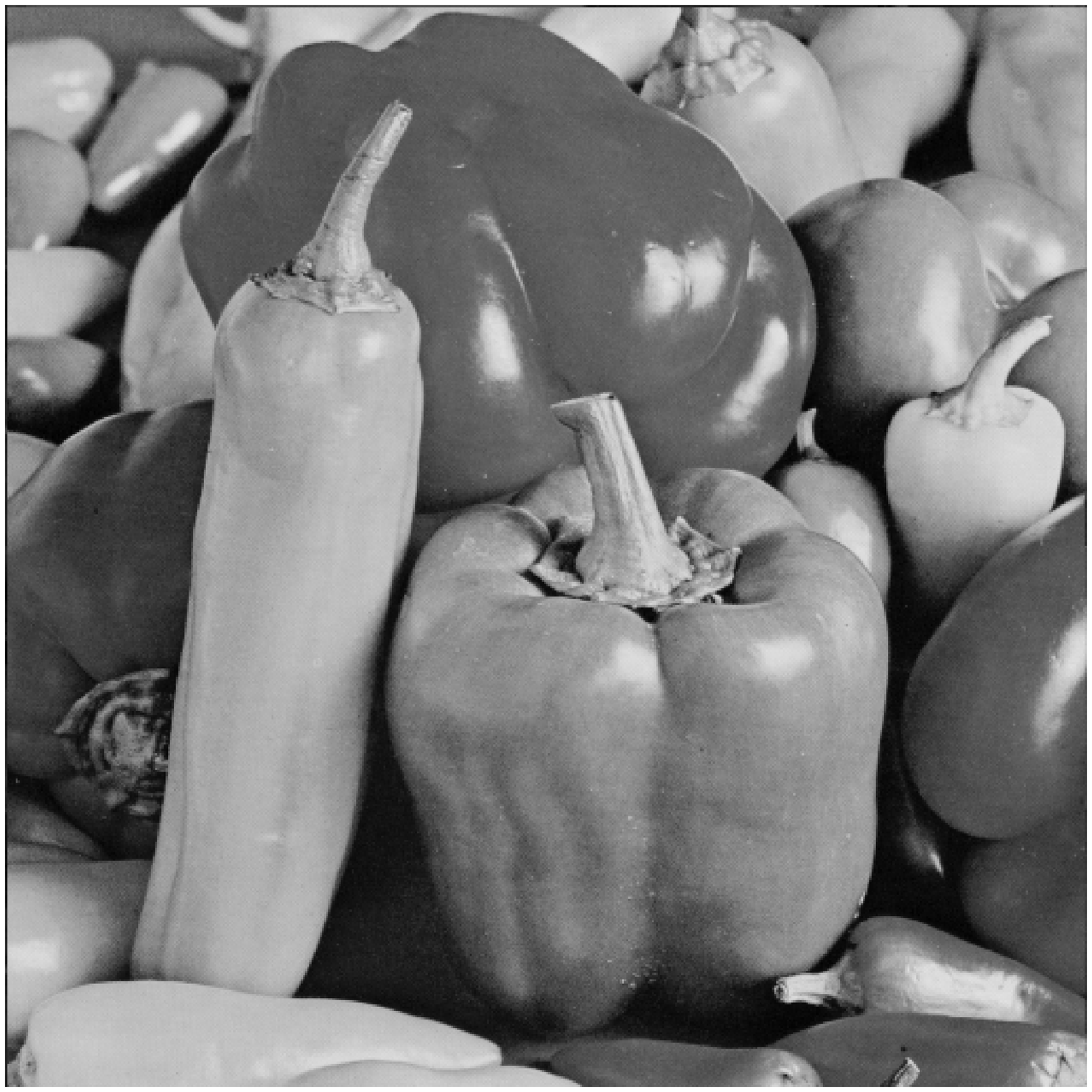, width = 1.175in}}}
\caption{Visualization of the test images.}
\label{testImages}
\end{figure}

In this section, we provide detailed results of the SOS boosting and its local-global variant -- ''sharing the disagreement''. The results are presented for the images \textsf{Foreman}, \textsf{Lena}, \textsf{House}, \textsf{Fingerprint} and \textsf{Peppers} (see Figure \ref{testImages}). These images are extensively tested in earlier work, thus enabling a convenient and fair demonstration of the potential of the proposed boosting.
The images are corrupted by an additive zero-mean Gaussian noise with a standard-deviation $ \sigma $. The denoising performance is evaluated using the Peak Signal to Noise Ratio (PSNR), defined as $ 20\log_{10}(\frac{255}{\sqrt{\text{MSE}}})$, where $\text{MSE}$ is the Mean Squared Error between the original image and its denoised version.

\subsection{SOS Boosting with state-of-the-art algorithms}

The proposed SOS boosting is applicable to a wide range of denoising algorithms. We demonstrate its abilities by improving several state-of-the-art methods: (i) K-SVD \cite{KSVD_REF1}, (ii) NLM \cite{NL_DENOISE_REF4, ipol_nlm}, (iii) BM3D \cite{BM3D_REF}, and (iv) EPLL \cite{zoran2011learning}. The K-SVD \cite{KSVD_REF1}, which was discussed in detail in this paper, is based on an adaptive sparsity model. The NLM \cite{NL_DENOISE_REF4} leverages the ''self-similarity'' property of natural images, i.e., the assumption that each patch may have similar patches within the image. The BM3D \cite{BM3D_REF} combines the ''self-similarity'' property with a sparsity model, achieving the best restoration and even touches some recently developed image denoising bounds \cite{levin2011natural}. The EPLL \cite{zoran2011learning}, which was described in Section \ref{intro}, represents the image patches using the Gaussian Mixture Model (GMM), and encourages the global result to comply with the local patches prior. As can be inferred, these algorithms are diverse and build upon different models and forces. Furthermore, the EPLL can be considered as a boosting method by-itself, designed to improve a GMM denoising algorithm. The diversity of the above algorithms emphasizes the potential of the SOS boosting.

The improved denoising performance is gained simply by applying the authors' original software as a ''black-box'', \emph{without any internal algorithmic modifications or parameters settings}\footnote{The original K-SVD uses $ 8 \times 8 $ patches, but our experiments show that $ 9 \times 9 $ yields nearly the same results for the core algorithm, while enabling better improvement with the SOS boosting. As a consequence, in the following experiments we demonstrate the results of the $ 9 \times 9 $ version.}. Such modifications may lead to better results and we leave these for future study. In order to apply SOS boosting we need to set the parameters $ \rho $, $ \tau $, and a modified noise-level $ \hhsigma $ (although $ \sigma $ is known). The parameter $ \hhsigma $, which might be a little higher than $ \sigma $, represents the noise-level of $ \y+\rho\xh^k $. We can estimate $ \hhsigma $ automatically (e.g using \cite{zoran2009scale}) or tunning a fixed value manually. In the following experiments we choose the second option. We set $ \tau = 1 $ (the effect of $ \tau^* $ is demonstrated later on) and run several tests to tune $ \rho $ and $ \hhsigma $ per each noise level and denoising algorithm, as detailed in Table \ref{tab:sos} under the 'SOS params' column.

In the case of the EPLL and BM3D, the authors' software is designed to denoise an input image in the range of 0 to 1. As such, we apply the SOS boosting ($ \tau=1 $) in the following formulation:
\begin{align}
\xh^{k+1} = \frac{1}{1-\tilde{\rho}} \cdot f\left( (1-\tilde{\rho})y+\tilde{\rho}\xh^k\right)  - \frac{\tilde{\rho}}{1-\tilde{\rho}} \cdot \xh^k ,
\end{align}
with a corresponding $ \tilde{\sigma} $. In order to remain consistent with the SOS parameters of the K-SVD and NLM, which apply Equation (\ref{rho_sos}), we provide hereafter the parameters $ \rho =  \frac{\tilde{\rho}}{1-\tilde{\rho}}$ and $ \hhsigma = \frac{\tilde{\sigma}}{1-\tilde{\rho}} $ for the EPLL and BM3D. 

\begin{table}[!htbf]		
\caption{Comparison between the denoising results [PSNR] of various algorithms (K-SVD \cite{KSVD_REF1}, NLM \cite{ipol_nlm}, BM3D \cite{BM3D_REF} and EPLL \cite{zoran2011learning}) and their SOS boosting outcomes. Per each denoising algorithm, we apply the authors' original software with the SOS formulation (using $\tau=1$, with the appropriate $ \rho $ and $ \hhsigma $). The best results per each denoising algorithm, image, and noise level are highlighted.}																																		
\begin{center}\footnotesize
\renewcommand{\arraystretch}{1.2}
\renewcommand{\tabcolsep}{0.022cm}
\begin{tabular}{|c||c|c||c|c||c|c||c|c||c|c||c|c||c|c|c|} \hline																																		
\multicolumn{16}{|c|}{K-SVD \cite{KSVD_REF1}}  \\  \cline{1-16}																																		
\multirow{2}[0]{*}{$ \sigma $} & \multicolumn{2}{c||}{{SOS params}} & \multicolumn{2}{c||}{\textsf{Foreman}} & \multicolumn{2}{c||}{\textsf{Lena}} & \multicolumn{2}{c||}{\textsf{House}} & \multicolumn{2}{c||}{\textsf{Fingerprint}} & \multicolumn{2}{c||}{\textsf{Peppers}} &  \multicolumn{3}{c|}{Average} \\ \cline{2-16}																																		
& $ \rho $ & $ \hhsigma $& Orig & SOS      & Orig & SOS      & Orig & SOS      & Orig & SOS      & Orig & SOS & Orig & SOS & Imprv.\\ \hline																																		
{10	}&{	0.30	}&{ $	1.00	\sigma$	}&{	36.92	}& \textbf{	37.13	}&{	35.47	}& \textbf{	35.58	}&{	36.25	}& \textbf{	36.49	}&{	32.27	}& \textbf{	32.35	}&{	34.68	}& \textbf{	34.71	}&{	35.12	}& \textbf{	35.25	}&	0.13	\\ \hline		
{20	}&{	0.60	}&{ $	1.00	\sigma$	}&{	33.81	}& \textbf{	34.11	}&{	32.43	}& \textbf{	32.67	}&{	33.34	}& \textbf{	33.62	}&{	28.31	}& \textbf{	28.54	}&{	32.29	}& \textbf{	32.35	}&{	32.04	}& \textbf{	32.26	}&	0.22	\\ \hline		
{25	}&{	1.00	}&{ $	1.00	\sigma$	}&{	32.83	}& \textbf{	33.12	}&{	31.32	}& \textbf{	31.62	}&{	32.39	}& \textbf{	32.72	}&{	27.13	}& \textbf{	27.44	}&{	31.43	}& \textbf{	31.49	}&{	31.02	}& \textbf{	31.28	}&	0.26	\\ \hline		
{50	}&{	1.00	}&{ $	1.00	\sigma$	}&{	28.88	}& \textbf{	29.85	}&{	27.75	}& \textbf{	28.37	}&{	28.01	}& \textbf{	28.98	}&{	23.20	}& \textbf{	23.98	}&{	28.16	}& \textbf{	28.66	}&{	27.20	}& \textbf{	27.97	}&	0.77	\\ \hline		
{75	}&{	1.00	}&{ $	1.00	\sigma$	}&{	26.24	}& \textbf{	27.32	}&{	25.74	}& \textbf{	26.40	}&{	25.23	}& \textbf{	26.85	}&{	19.93	}& \textbf{	21.88	}&{	25.73	}& \textbf{	26.72	}&{	24.57	}& \textbf{	25.83	}&	1.26	\\ \hline		
{100	}&{	1.00	}&{ $	1.00	\sigma$	}&{	25.21	}& \textbf{	25.39	}&{	24.50	}& \textbf{	24.99	}&{	23.69	}& \textbf{	24.59	}&{	17.98	}& \textbf{	19.61	}&{	24.17	}& \textbf{	25.03	}&{	23.11	}& \textbf{	23.92	}&	0.81	\\ \hline	\hline

																																		
\multicolumn{16}{|c|}{NLM \cite{ipol_nlm}}  \\  \hline																																		
\multirow{2}[0]{*}{$ \sigma $} & \multicolumn{2}{c||}{{SOS params}} & \multicolumn{2}{c||}{\textsf{Foreman}} & \multicolumn{2}{c||}{\textsf{Lena}} & \multicolumn{2}{c||}{\textsf{House}} & \multicolumn{2}{c||}{\textsf{Fingerprint}} & \multicolumn{2}{c||}{\textsf{Peppers}} &  \multicolumn{3}{c|}{Average} \\ \cline{2-16}																																		
& $ \rho $ & $ \hhsigma $& Orig & SOS      & Orig & SOS      & Orig & SOS      & Orig & SOS      & Orig & SOS & Orig & SOS & Imprv.\\ \hline																																		
{10	}&{	0.10	}&{ $	1.20	\sigma$	}&{	35.55	}& \textbf{	36.13	}&{	34.32	}& \textbf{	34.72	}&{	34.93	}& \textbf{	35.39	}&{	31.04	}& \textbf{	31.45	}&{	34.02	}& \textbf{	34.37	}&{	33.97	}& \textbf{	34.41	}&	0.44	\\ \hline		
{20	}&{	0.10	}&{ $	1.10	\sigma$	}&{	32.78	}& \textbf{	33.15	}&{	31.59	}& \textbf{	31.84	}&{	32.40	}& \textbf{	32.86	}&{	27.26	}& \textbf{	27.55	}&{	31.49	}& \textbf{	31.78	}&{	31.10	}& \textbf{	31.44	}&	0.34	\\ \hline		
{25	}&{	0.40	}&{ $	1.10	\sigma$	}&{	31.26	}& \textbf{	31.88	}&{	30.51	}& \textbf{	30.88	}&{	31.22	}& \textbf{	31.87	}&{	26.20	}& \textbf{	26.22	}&{	30.47	}& \textbf{	30.85	}&{	29.93	}& \textbf{	30.34	}&	0.41	\\ \hline		
{50	}&{	0.50	}&{ $	1.05	\sigma$	}&{	27.62	}& \textbf{	28.05	}&{	27.31	}& \textbf{	27.57	}&{	27.42	}& \textbf{	28.00	}&{	23.00	}& \textbf{	23.06	}&{	26.79	}& \textbf{	26.97	}&{	26.43	}& \textbf{	26.73	}&	0.30	\\ \hline		
{75	}&{	0.60	}&{ $	1.05	\sigma$	}&{	25.38	}& \textbf{	26.06	}&{	25.12	}& \textbf{	25.75	}&{	24.59	}& \textbf{	25.49	}&{	20.84	}& \textbf{	21.13	}&{	24.63	}& \textbf{	24.94	}&{	24.11	}& \textbf{	24.67	}&	0.56	\\ \hline		
{100	}&{	0.60	}&{ $	1.05	\sigma$	}&{	23.82	}& \textbf{	24.21	}&{	23.71	}& \textbf{	24.17	}&{	23.07	}& \textbf{	23.45	}&{	19.50	}& \textbf{	19.67	}&{	23.27	}& \textbf{	23.65	}&{	22.67	}& \textbf{	23.03	}&	0.36	\\ \hline	\hline

																																		
\multicolumn{16}{|c|}{BM3D \cite{BM3D_REF}}  \\  \cline{1-16}																																		
\multirow{2}[0]{*}{$ \sigma $} & \multicolumn{2}{c||}{{SOS params}} & \multicolumn{2}{c||}{\textsf{Foreman}} & \multicolumn{2}{c||}{\textsf{Lena}} & \multicolumn{2}{c||}{\textsf{House}} & \multicolumn{2}{c||}{\textsf{Fingerprint}} & \multicolumn{2}{c||}{\textsf{Peppers}} &  \multicolumn{3}{c|}{Average} \\ \cline{2-16}																																		
& $ \rho $ & $ \hhsigma $& Orig & SOS      & Orig & SOS      & Orig & SOS      & Orig & SOS      & Orig & SOS & Orig & SOS & Imprv.\\ \hline																																		
{10	}&{	0.05	}&{ $	1.02	\sigma$	}&{	37.23	}& \textbf{	37.24	}&{	35.84	}& \textbf{	35.85	}&{	36.54	}& \textbf{	36.55	}&{	32.46	}& \textbf{	32.47	}&{	34.96	}&{	34.96	}&{	35.40	}& \textbf{	35.41	}&	0.01	\\ \hline		
{20	}&{	0.11	}&{ $	1.03	\sigma$	}&{	34.50	}& \textbf{	34.55	}&{	33.00	}& \textbf{	33.02	}&{	33.81	}&{	33.81	}&{	28.82	}& \textbf{	28.83	}&{	32.67	}& \textbf{	32.68	}&{	32.56	}& \textbf{	32.58	}&	0.02	\\ \hline		
{25	}&{	0.18	}&{ $	1.04	\sigma$	}&{	33.41	}& \textbf{	33.48	}&{	32.02	}& \textbf{	32.04	}&{	32.90	}&{	32.90	}&{	27.72	}&{	27.72	}&{	31.87	}& \textbf{	31.89	}&{	31.58	}& \textbf{	31.61	}&	0.03	\\ \hline		
{50	}&{	0.25	}&{ $	1.04	\sigma$	}&{	30.22	}& \textbf{	30.36	}&{	28.98	}& \textbf{	29.00	}&{	29.68	}& \textbf{	29.80	}&{	24.57	}& \textbf{	24.59	}&{	29.09	}& \textbf{	29.14	}&{	28.51	}& \textbf{	28.58	}&	0.07	\\ \hline		
{75	}&{	0.43	}&{ $	1.04	\sigma$	}&{	28.09	}& \textbf{	28.30	}&{	27.15	}& \textbf{	27.21	}&{	27.73	}& \textbf{	27.95	}&{	22.84	}& \textbf{	22.88	}&{	27.09	}& \textbf{	27.11	}&{	26.58	}& \textbf{	26.69	}&	0.11	\\ \hline		
{100	}&{	0.43	}&{ $	1.11	\sigma$	}&{	26.16	}& \textbf{	26.42	}&{	25.77	}& \textbf{	25.82	}&{	25.74	}& \textbf{	25.93	}&{	21.56	}& \textbf{	21.67	}&{	25.72	}& \textbf{	25.81	}&{	24.99	}& \textbf{	25.13	}&	0.14	\\ \hline	\hline


\multicolumn{16}{|c|}{EPLL \cite{zoran2011learning}}  \\  \cline{1-16}																																		
\multirow{2}[0]{*}{$ \sigma $} & \multicolumn{2}{c||}{{SOS params}} & \multicolumn{2}{c||}{\textsf{Foreman}} & \multicolumn{2}{c||}{\textsf{Lena}} & \multicolumn{2}{c||}{\textsf{House}} & \multicolumn{2}{c||}{\textsf{Fingerprint}} & \multicolumn{2}{c||}{\textsf{Peppers}} &  \multicolumn{3}{c|}{Average} \\ \cline{2-16}																																		
& $ \rho $ & $ \hhsigma $& Orig & SOS      & Orig & SOS      & Orig & SOS      & Orig & SOS      & Orig & SOS & Orig & SOS & Imprv.\\ \hline																																		
{10	}&{	0.09	}&{ $	1.11	\sigma$	}&{	36.98	}& \textbf{	37.09	}&{	35.53	}& \textbf{	35.66	}&{	35.67	}& \textbf{	35.73	}&{	32.12	}& \textbf{	32.33	}&{	34.82	}& \textbf{	34.93	}&{	35.02	}& \textbf{	35.15	}&	0.13	\\ \hline		
{20	}&{	0.09	}&{ $	1.11	\sigma$	}&{	33.70	}& \textbf{	34.03	}&{	32.57	}& \textbf{	32.77	}&{	33.06	}& \textbf{	33.33	}&{	28.26	}& \textbf{	28.49	}&{	32.48	}& \textbf{	32.69	}&{	32.01	}& \textbf{	32.26	}&	0.25	\\ \hline		
{25	}&{	0.18	}&{ $	1.11	\sigma$	}&{	32.44	}& \textbf{	32.78	}&{	31.62	}& \textbf{	31.84	}&{	32.07	}& \textbf{	32.38	}&{	27.14	}& \textbf{	27.30	}&{	31.59	}& \textbf{	31.87	}&{	30.97	}& \textbf{	31.23	}&	0.26	\\ \hline		
{50	}&{	0.43	}&{ $	1.11	\sigma$	}&{	29.24	}& \textbf{	29.60	}&{	28.39	}& \textbf{	28.66	}&{	28.78	}& \textbf{	29.24	}&{	23.63	}& \textbf{	23.69	}&{	28.67	}& \textbf{	29.00	}&{	27.74	}& \textbf{	28.04	}&	0.30	\\ \hline		
{75	}&{	0.43	}&{ $	1.11	\sigma$	}&{	27.17	}& \textbf{	27.55	}&{	26.53	}& \textbf{	26.85	}&{	26.78	}& \textbf{	27.28	}&{	21.51	}& \textbf{	21.54	}&{	26.73	}& \textbf{	27.10	}&{	25.74	}& \textbf{	26.06	}&	0.32	\\ \hline		
{100	}&{	0.43	}&{ $	1.11	\sigma$	}&{	25.58	}& \textbf{	25.91	}&{	25.23	}& \textbf{	25.49	}&{	25.08	}& \textbf{	25.47	}&{	19.77	}&{	19.77	}&{	25.36	}& \textbf{	25.73	}&{	24.20	}& \textbf{	24.47	}&	0.27	\\ \hline		
\end{tabular}%
\end{center}

\label{tab:sos}%
\end{table}%

\begin{figure}[tbp]
\centering
\mbox{\subfigure[{Noisy Image}]{\epsfig{figure=./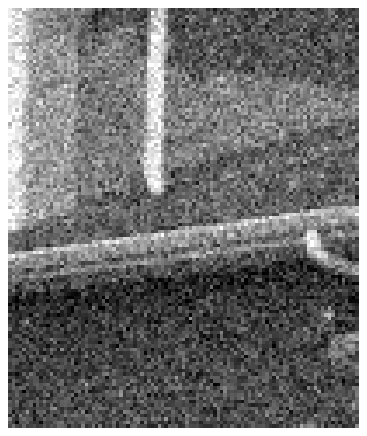, width = 1.19in}} 
 \subfigure[{KSVD, 31.20}]{\epsfig{figure = ./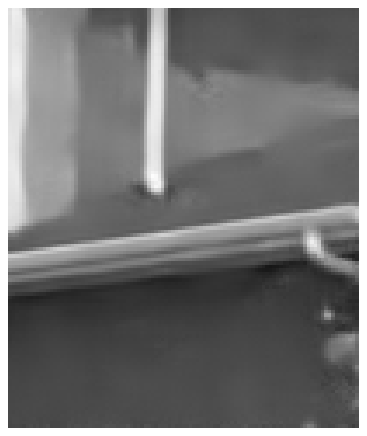, width = 1.19in}}
 \subfigure[{NLM, 30.02}]{\epsfig{figure=./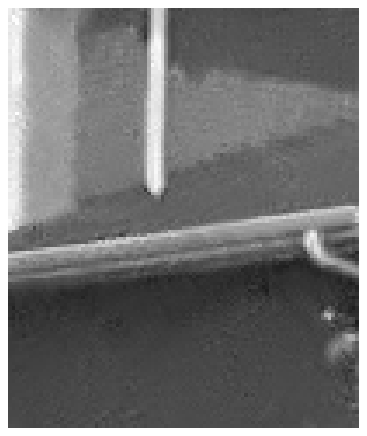, width = 1.19in}}
 \subfigure[{BM3D, 31.88}]{\epsfig{figure=./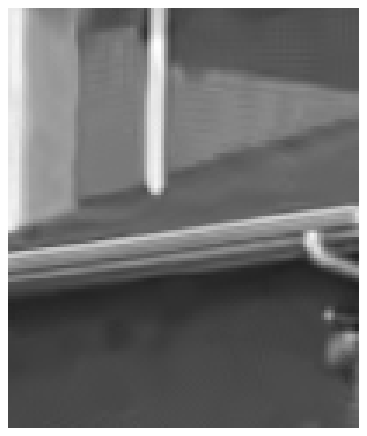, width = 1.19in}}
 \subfigure[{EPLL, 30.88}]{\epsfig{figure=./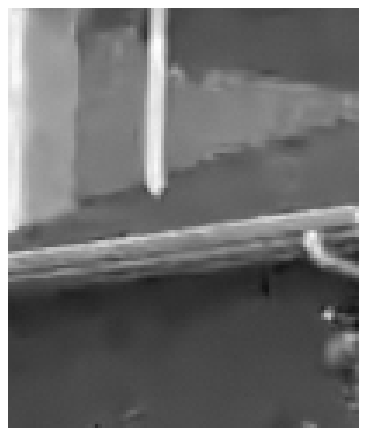, width = 1.19in}}}\\
\centering 
 \mbox{\subfigure[Algo. \ref{consensus_algo}, 31.85]{\epsfig{figure=./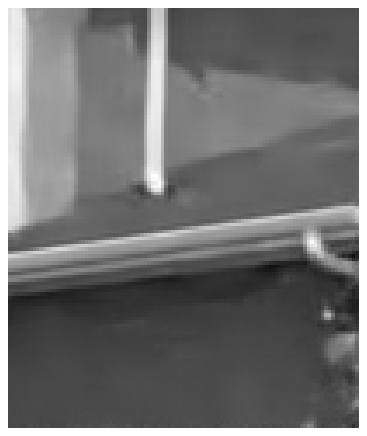, width = 1.19in}}
 \subfigure[SOS KSVD,31.91]{\epsfig{figure = ./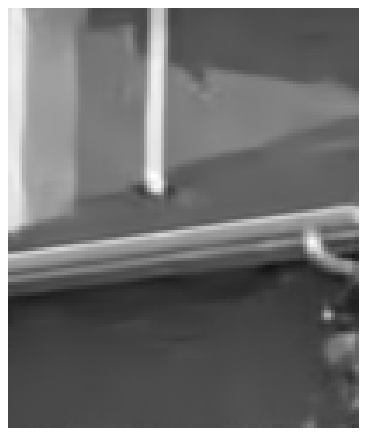, width = 1.19in}} 
 \subfigure[SOS NLM, 30.56]{\epsfig{figure=./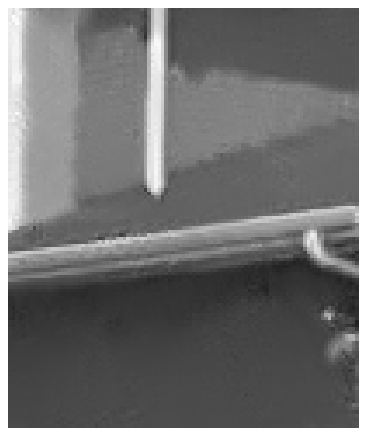, width = 1.19in}}
 \subfigure[SOS BM3D, 31.94]{\epsfig{figure=./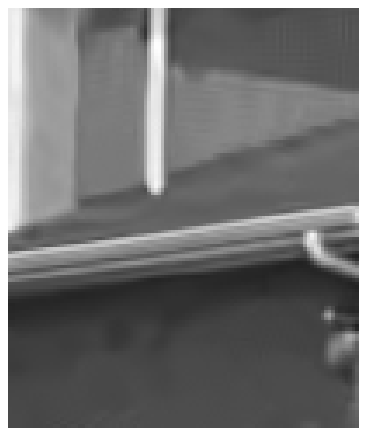, width = 1.19in}}
 \subfigure[SOS EPLL, 31.15]{\epsfig{figure=./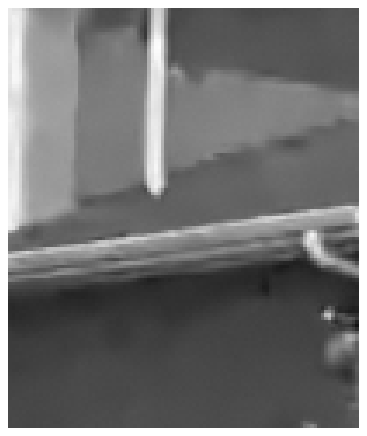, width = 1.19in}}}
\caption{Visual and PSNR comparisons between standard denoising and boosting outcomes of a $ 100 \times 120 $ cropped region from noisy image \textsf{House} ($ \sigma = 25 $).}
\label{clean_images}
\end{figure}

\begin{figure}[tbp]
\centering
\mbox{\subfigure[{Noisy Image}]{\epsfig{figure=./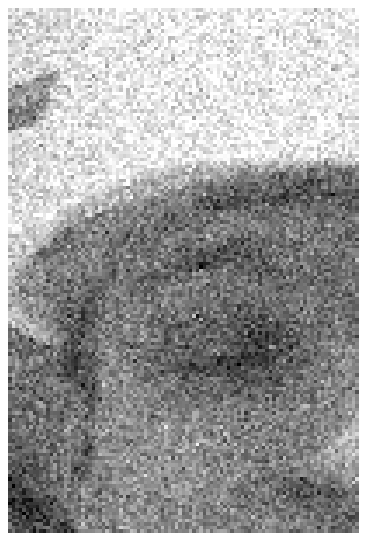, width = 1.19in}} 
 \subfigure[{KSVD, 33.72}]{\epsfig{figure = ./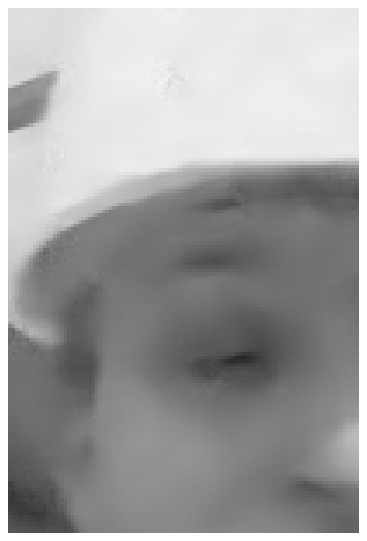, width = 1.19in}}
 \subfigure[{NLM, 31.64}]{\epsfig{figure=./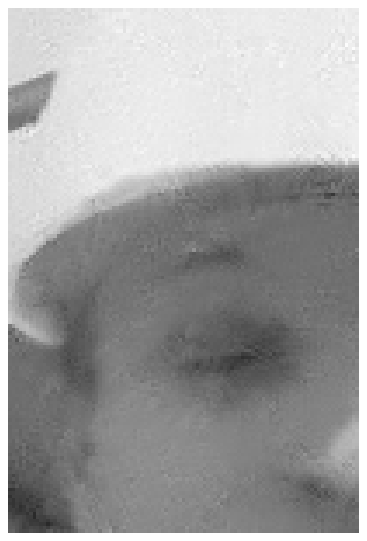, width = 1.19in}}
 \subfigure[{BM3D, 34.66}]{\epsfig{figure=./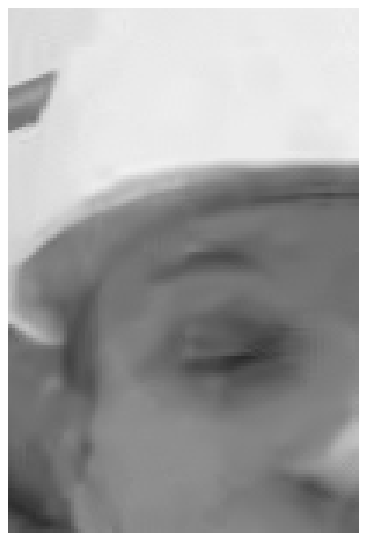, width = 1.19in}}
 \subfigure[{EPLL, 33.62}]{\epsfig{figure=./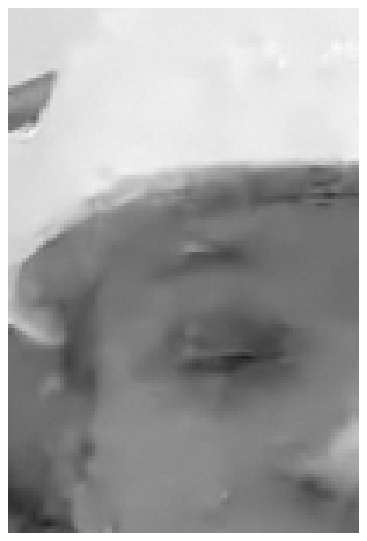, width = 1.19in}}}\\
\centering 
 \mbox{\subfigure[Algo. \ref{consensus_algo}, 34.37]{\epsfig{figure=./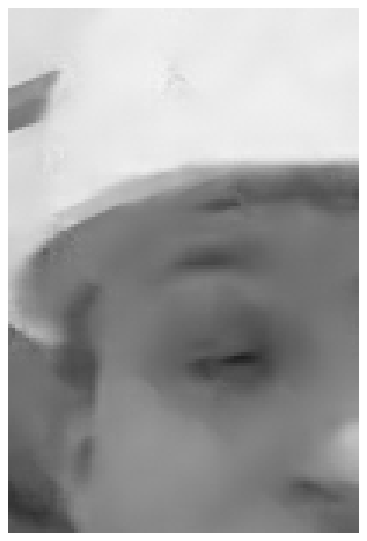, width = 1.19in}}
 \subfigure[SOS KSVD,34.38]{\epsfig{figure = ./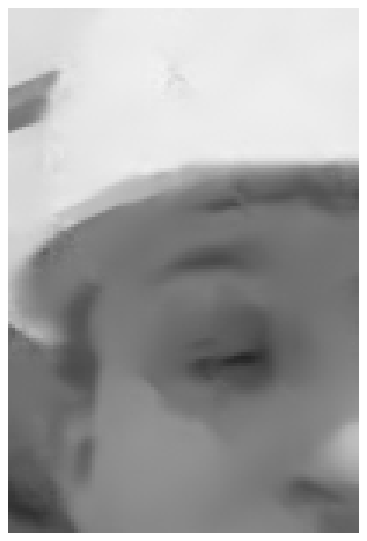, width = 1.19in}} 
 \subfigure[SOS NLM, 32.28]{\epsfig{figure=./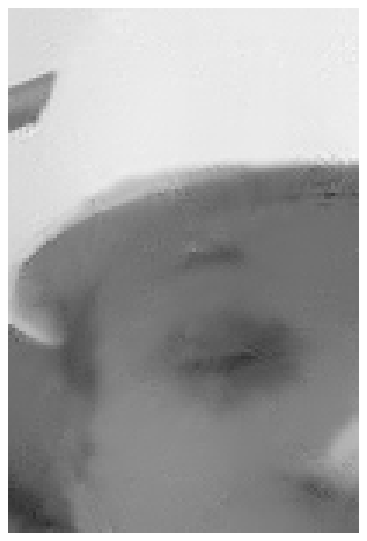, width = 1.19in}}
 \subfigure[SOS BM3D, 34.71]{\epsfig{figure=./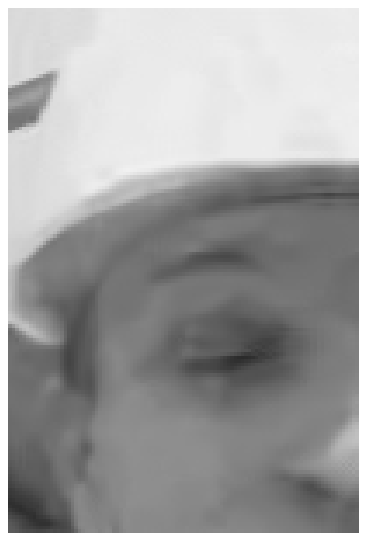, width = 1.19in}}
 \subfigure[SOS EPLL, 34.09]{\epsfig{figure=./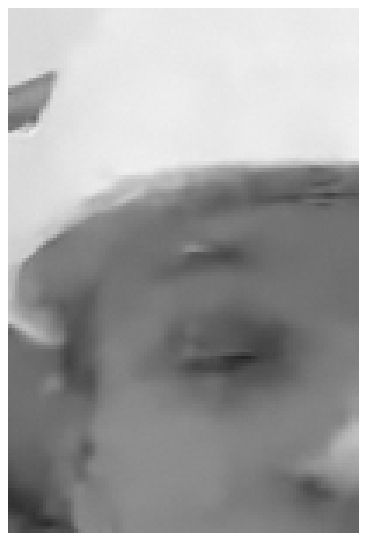, width = 1.19in}}}
\caption{Visual and PSNR comparisons between standard denoising and boosting outcomes of a $ 100 \times 150 $ cropped region from noisy image \textsf{Foreman} ($ \sigma = 25 $).}
\label{clean_images2}
\end{figure}
\begin{figure}[tbp]
\centering
\mbox{\subfigure[{Noisy Image}]{\epsfig{figure=./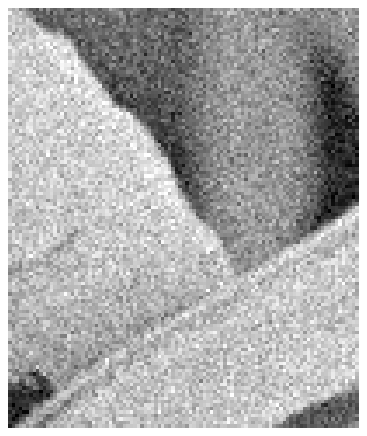, width = 1.19in}} 
 \subfigure[{KSVD, 33.06}]{\epsfig{figure = ./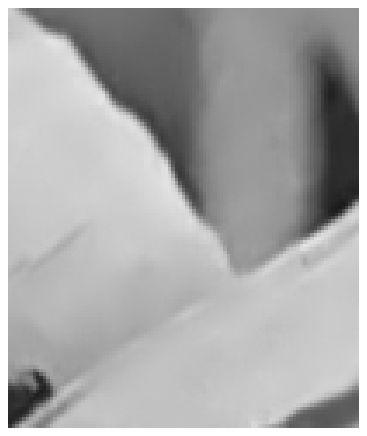, width = 1.19in}}
 \subfigure[{NLM, 32.13}]{\epsfig{figure=./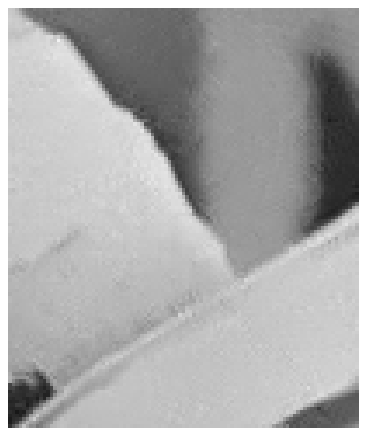, width = 1.19in}}
 \subfigure[{BM3D, 34.03}]{\epsfig{figure=./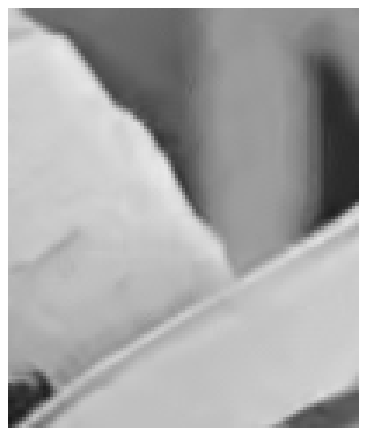, width = 1.19in}}
 \subfigure[{EPLL, 33.29}]{\epsfig{figure=./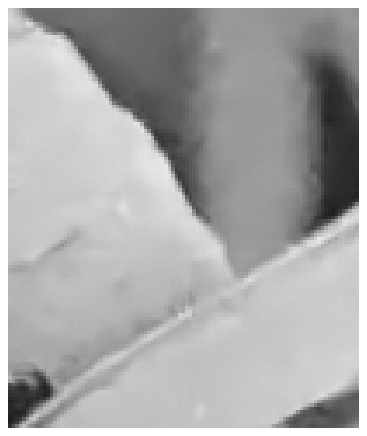, width = 1.19in}}}\\
\centering 
 \mbox{\subfigure[Algo. \ref{consensus_algo}, 33.56]{\epsfig{figure=./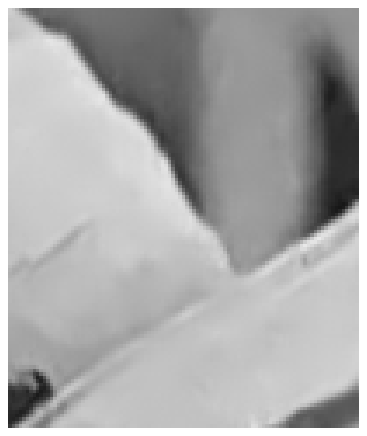, width = 1.19in}}
 \subfigure[SOS KSVD,33.48]{\epsfig{figure = ./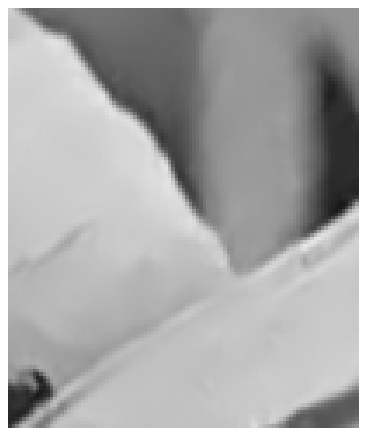, width = 1.19in}} 
 \subfigure[SOS NLM, 32.41]{\epsfig{figure=./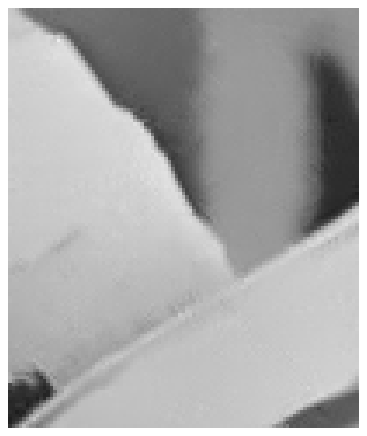, width = 1.19in}}
 \subfigure[SOS BM3D, 34.07]{\epsfig{figure=./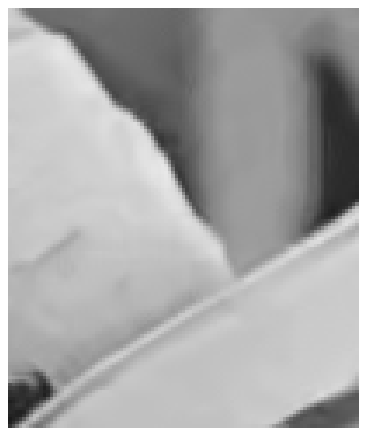, width = 1.19in}}
 \subfigure[SOS EPLL, 33.56]{\epsfig{figure=./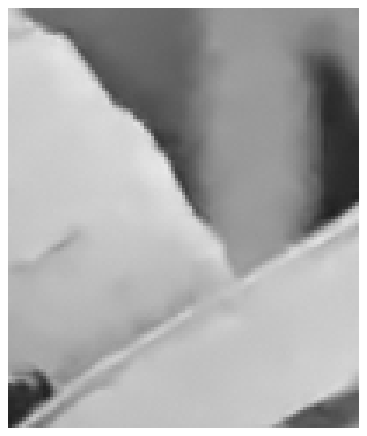, width = 1.19in}}}
\caption{Visual and PSNR comparisons between standard denoising and boosting outcomes of a $ 100 \times 120 $ cropped region from noisy image \textsf{Lena} ($ \sigma = 20 $).}
\label{clean_images3}
\end{figure}

\begin{figure}[htbp]
\centering
\mbox{{\epsfig{figure=./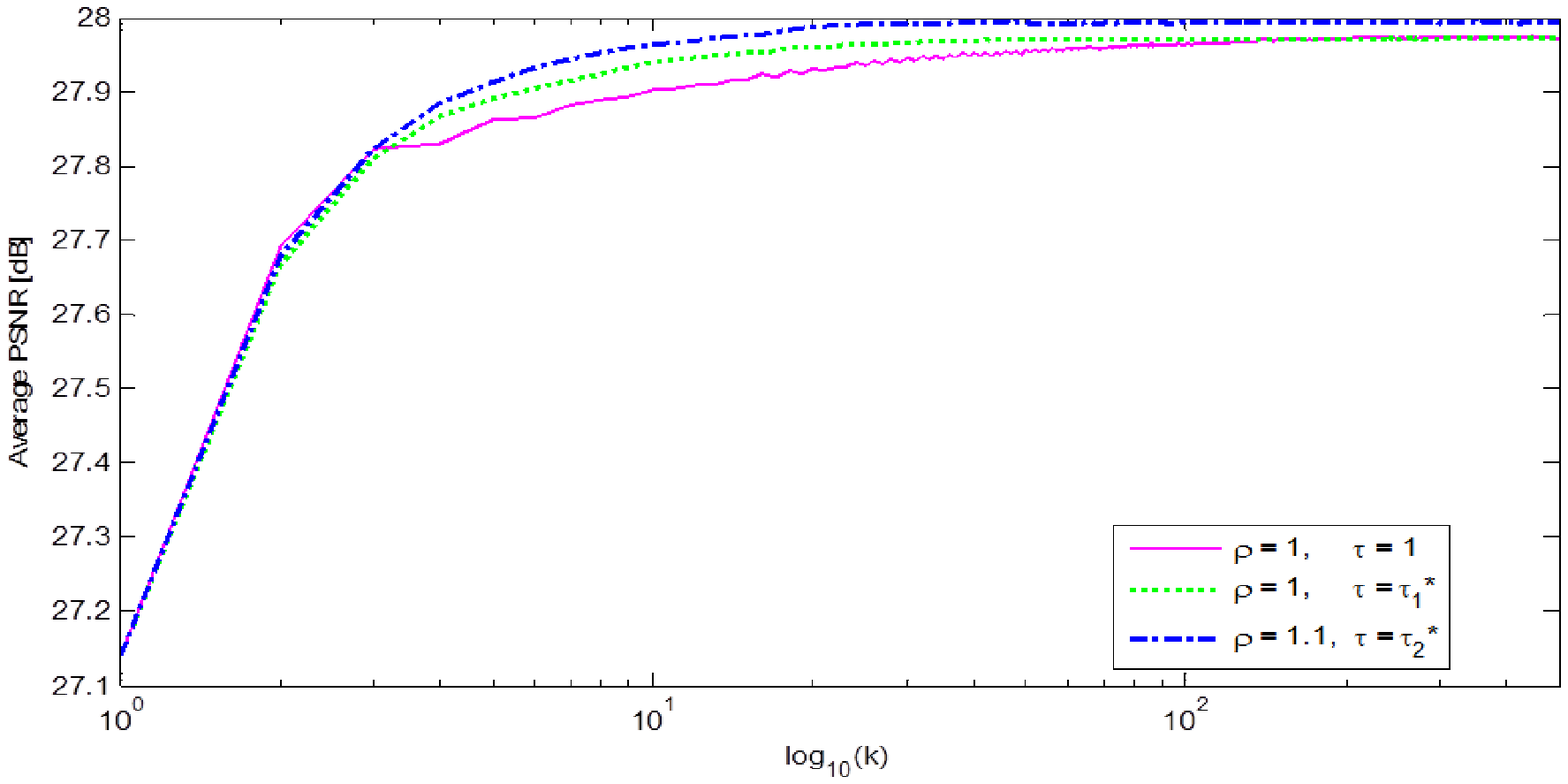, width = 3in}}}
\vspace{-0.2cm}	
\caption{Demonstration of the effect of $ \tau^* $ on the SOS boosting outcome for the K-SVD denoising ($ \sigma=50 $).}
\label{fig:tau}

\end{figure}

Table \ref{tab:sos} lists the restoration results of various denoising algorithms and their SOS versions. The PSNR values that appear in the 'Orig' column are obtained by applying the denoising algorithm on $ \y $ using the input noise level $ \sigma $ (and not $ \hhsigma $ as done at the consecutive SOS-steps). These are also the first estimates of the SOS boosting (i.e., $ \xh^1 $).
In the case of the K-SVD denoising \cite{KSVD_REF1}, at the first SOS-step we apply $ 20 $ iterations of sparse-coding and dictionary-update, while at the rest SOS-steps we apply only $ 2 $ such iterations (we found this to be a convenient compromise between runtime and performance). We operate the K-SVD \cite{KSVD_REF1}, NLM \cite{ipol_nlm}, BM3D \cite{BM3D_REF} and EPLL \cite{zoran2011learning} for $ 30 $, $ 2 $, $ 3 $ and $ 4 $ SOS-steps, respectively.

The 'average imprv.' column in Table \ref{tab:sos} indicates that the SOS boosting achieves an improvement over the original denoising algorithms. More specifically, for all denoising algorithms, images, and noise levels, the SOS outcomes are at least as good as the original results and more important -- usually better (in terms of PSNR). A clear improvement over the whole range of noise levels is achieved for the K-SVD \cite{KSVD_REF1}, NLM \cite{ipol_nlm, NL_DENOISE_REF4} and EPLL \cite{zoran2011learning}. While in the case of the BM3D \cite{BM3D_REF}, we succeed in improving it slightly, mainly for high noise energy. The fact that the BM3D performance is very close to the denoising bound posed in \cite{levin2011natural} explains the difficulties in improving it. A visual comparison is given in Figures \ref{clean_images}, \ref{clean_images2}, and \ref{clean_images3} illustrating the effectiveness of the SOS boosting. Compared to the original results, the SOS offers better restoration of edges (in the case of the K-SVD -- focus on the house's roof, foreman's eye and ear, and lena's hat). In addition, the SOS obtains cleaner estimations (when using the NLM), and less artifacts (for the EPLL and somewhat also for the BM3D).

\begin{table}[ht]																														\caption{Denoising results [PSNR] of  the K-SVD \cite{KSVD_REF1} and its SOS boosting outcomes, where we use the parameter  $\tau^*$ (according to Equation (\ref{tau_star})), along with the appropriate $ \rho $ and $ \hhsigma $. The best results per each image and noise level are highlighted.}																																	
\begin{center}\footnotesize
\renewcommand{\arraystretch}{1.2}
\renewcommand{\tabcolsep}{0.022cm}

\begin{tabular}{|c||c|c||c|c||c|c||c|c||c|c||c|c||c|c|c|} \hline																																	
																																	

\multirow{2}[0]{*}{$ \sigma $} & \multicolumn{2}{c||}{{SOS params}} & \multicolumn{2}{c||}{\textsf{Foreman}} & \multicolumn{2}{c||}{\textsf{Lena}} & \multicolumn{2}{c||}{\textsf{House}} & \multicolumn{2}{c||}{\textsf{Fingerprint}} & \multicolumn{2}{c||}{\textsf{Peppers}} &  \multicolumn{3}{c|}{Average} \\ \cline{2-16}																																	
& $ \rho $ & $ \hhsigma $& Orig & SOS      & Orig & SOS      & Orig & SOS      & Orig & SOS      & Orig & SOS & Orig & SOS & Imprv.\\ \hline																																	
{10	}&{	0.30	}&{ $	1.00	\sigma$	}&{	36.92	}& \textbf{	37.14	}&{	35.47	}& \textbf{	35.58	}&{	36.25	}& \textbf{	36.49	}&{	32.27	}& \textbf{	32.35	}&{	34.68	}& \textbf{	34.72	}&{	35.12	}& \textbf{	35.26	}&	0.14	\\ \hline	
{20	}&{	0.60	}&{ $	1.00	\sigma$	}&{	33.81	}& \textbf{	34.11	}&{	32.43	}& \textbf{	32.68	}&{	33.34	}& \textbf{	33.62	}&{	28.31	}& \textbf{	28.54	}&{	32.29	}& \textbf{	32.35	}&{	32.04	}& \textbf{	32.26	}&	0.22	\\ \hline	
{25	}&{	1.00	}&{ $	1.00	\sigma$	}&{	32.83	}& \textbf{	33.12	}&{	31.32	}& \textbf{	31.65	}&{	32.39	}& \textbf{	32.74	}&{	27.13	}& \textbf{	27.46	}&{	31.43	}& \textbf{	31.53	}&{	31.02	}& \textbf{	31.30	}&	0.28	\\ \hline	
{50	}&{	1.10	}&{ $	1.00	\sigma$	}&{	28.88	}& \textbf{	29.86	}&{	27.75	}& \textbf{	28.42	}&{	28.01	}& \textbf{	29.05	}&{	23.20	}& \textbf{	24.03	}&{	28.16	}& \textbf{	28.68	}&{	27.20	}& \textbf{	28.01	}&	0.81	\\ \hline	
{75	}&{	1.20	}&{ $	1.00	\sigma$	}&{	26.24	}& \textbf{	27.36	}&{	25.74	}& \textbf{	26.50	}&{	25.23	}& \textbf{	27.08	}&{	19.93	}& \textbf{	22.02	}&{	25.73	}& \textbf{	26.80	}&{	24.57	}& \textbf{	25.95	}&	1.38	\\ \hline	
{100	}&{	1.20	}&{ $	1.00	\sigma$	}&{	25.21	}& \textbf{	25.46	}&{	24.50	}& \textbf{	25.09	}&{	23.69	}& \textbf{	24.70	}&{	17.98	}& \textbf{	19.93	}&{	24.17	}& \textbf{	25.17	}&{	23.11	}& \textbf{	24.07	}&	0.96	\\ \hline

\end{tabular}%
\end{center}																																	
																																	
\label{tab:tau}%
\vspace{-0.15cm}																															
\end{table}%

In the context of the K-SVD denoising, based on Equation (\ref{tau_star}), we demonstrate the effect of $ \tau^{*} $ on the SOS recursive function. Note that we do not test its influence on the other denoising algorithms because the information about their eigenvalues range, which is required in Equation (\ref{tau_star}), is not derived.
Figure \ref{fig:tau} plots the average PSNR over the test images ($ \sigma = 50 $), as a function of the SOS-step, for 3 different parameter settings: First, as a baseline, we apply the SOS with $ \rho=1 $ and $ \tau=1 $ (without using the closed-form expression for $ \tau^* $). Second, we improve the convergence rate by using $ \tau^* $ with the same signal-emphasis factor ($ \rho=1 $). Third, we plot the PSNR that obtained by the couple that leads to the best restoration ($ \rho=1.1 $ with the corresponding $ \tau^* $). As a reminder, according to Section \ref{parametrization} and Appendix \ref{best_params}, the parameters $ \rho $ and $ \tau $ affect the conditions for convergence and its rate. More specifically, a modification of $ \rho $ without an adjustment of $ \tau $ may violate the condition for convergence (e.g. according to condition (\ref{tau_rho_condition}), the couple $ \rho=1.1 $ and $ \tau = 1 $ results in $ \gamma>1 $). Therefore, using $ \tau^* $ enables to modify $ \rho $ and still converge, even with the fastest rate. These results are consistent with Table \ref{tab:tau}, which lists the achieved PSNR when applying the SOS for 30 steps using the best $ \rho $ and $ \tau^* $ (per noise level). As can be seen, $ \tau^* $ not only results in a faster convergence, but also allows a stronger emphasis of the estimated signal, thus leading to better restoration. 

\begin{table}[!htbp]																								
\caption{Comparison between the denoising results [PSNR] of the original K-SVD algorithm \cite{KSVD_REF1} and its ''sharing the disagreement'' boosting outcome (Algorithm \ref{consensus_algo}). The best results per each image and noise level are highlighted.}																															
\begin{center}\footnotesize
\renewcommand{\arraystretch}{1.2}
\renewcommand{\tabcolsep}{0.03cm}
\begin{tabular}{|c||c||c|c||c|c||c|c||c|c||c|c||c|c|c|} \hline																															
\multirow{2}[0]{*}{$ \sigma $} & \multirow{2}[0]{*}{{$ \hhsigma $}} & \multicolumn{2}{c||}{\textsf{Foreman}} & \multicolumn{2}{c||}{\textsf{Lena}} & \multicolumn{2}{c||}{\textsf{House}} & \multicolumn{2}{c||}{\textsf{Fingerprint}} & \multicolumn{2}{c||}{\textsf{Peppers}} &  \multicolumn{3}{c|}{Average} \\ \cline{3-15}																															
& & Orig & Boost      & Orig & Boost      & Orig & Boost      & Orig & Boost      & Orig & Boost & Orig & Boost & Imprv. \\ \hline																															
{10	}&{ $	1.08	\sigma$	}&{	36.92	}& \textbf{	37.13	}&{	35.47	}& \textbf{	35.58	}&{	36.25	}& \textbf{	36.34	}&{	32.27	}& \textbf{	32.35	}&{	34.68	}& \textbf{	34.70	}&{	35.12	}& \textbf{	35.22	}&	0.10	\\ \hline	
{20	}&{ $	1.02	\sigma$	}&{	33.81	}& \textbf{	34.11	}&{	32.43	}& \textbf{	32.68	}&{	33.34	}& \textbf{	33.56	}&{	28.31	}& \textbf{	28.59	}&{	32.29	}& \textbf{	32.37	}&{	32.04	}& \textbf{	32.26	}&	0.22	\\ \hline	
{25	}&{ $	1.02	\sigma$	}&{	32.83	}& \textbf{	33.17	}&{	31.32	}& \textbf{	31.64	}&{	32.39	}& \textbf{	32.71	}&{	27.13	}& \textbf{	27.47	}&{	31.43	}& \textbf{	31.60	}&{	31.02	}& \textbf{	31.32	}&	0.30	\\ \hline	
{50	}&{ $	1.00	\sigma$	}&{	28.88	}& \textbf{	29.37	}&{	27.75	}& \textbf{	28.28	}&{	28.01	}& \textbf{	28.67	}&{	23.20	}& \textbf{	24.04	}&{	28.16	}& \textbf{	28.55	}&{	27.20	}& \textbf{	27.78	}&	0.58	\\ \hline	
{75	}&{ $	1.00	\sigma$	}&{	26.24	}& \textbf{	27.04	}&{	25.74	}& \textbf{	26.28	}&{	25.23	}& \textbf{	26.54	}&{	19.93	}& \textbf{	21.76	}&{	25.73	}& \textbf{	26.52	}&{	24.57	}& \textbf{	25.63	}&	1.06	\\ \hline	
{100	}&{ $	1.00	\sigma$	}&{	25.21	}& \textbf{	25.28	}&{	24.50	}& \textbf{	24.91	}&{	23.69	}& \textbf{	24.43	}&{	17.98	}& \textbf{	19.82	}&{	24.17	}& \textbf{	24.92	}&{	23.11	}& \textbf{	23.87	}&	0.76	\\ \hline	
\end{tabular}%
\end{center}																							
\label{tab:cons}%
\vspace{-0.18cm}
\end{table}%

\subsection{Sharing the disagreement}
\label{sharing_sec}
We demonstrate the effectiveness of the local-global interpretation of the SOS boosting, which was described in Section \ref{disagreement_algo}. The denoising results of Table \ref{tab:cons} are obtained by applying Algorithm \ref{consensus_algo} for $ 30 $ steps, where each step includes $ 2 $ sparse-coding and dictionary-update iterations. The initial dictionary is obtained by applying $ 20 $ iterations of the K-SVD algorithm. Similarly to the SOS boosting, we tune the parameter $ \hhsigma $ per each input $ \sigma $ (this variant is limited to $ \rho=1 $ and $ \tau = 1 $).

According to Table \ref{tab:cons}, for all images and noise levels, ''sharing the disagreement'' boosting achieves a clear improvement over the original K-SVD algorithm \cite{KSVD_REF1}. Notice the resemblance and the differences in the PSNR values between Table \ref{tab:cons} and the K-SVD part in Table \ref{tab:sos}. In general, the differences originate from the non-linearity of the denoising algorithm -- the input patch to the sparse-coding step is different between the SOS and its local-global variant. 
As a reminder, the equivalence between these two approaches is valid under the assumption of a fixed filter-matrix (see Appendix \ref{sos_and_disagree} for more details). Furthermore, in the case of SOS boosting, more freedom is obtained by tuning the parameters $ \rho $ and $ \tau $, which may lead to better utilization of the prior (as shown in Figure \ref{fig:tau} and Table \ref{tab:tau}). However, visually, according to Figure \ref{clean_images}, \ref{clean_images2}, and \ref{clean_images3}, the outcomes of the SOS and its local-global variant are very similar, both of them improve effectively the restoration of the underlying signal.

To conclude, we demonstrate the potential of the SOS-boosting and its local-global interpretation. The proposed algorithm achieves a clear and meaningful improvement over the examined state-of-the-art denoising algorithms, both visually in terms of PSNR.

\section{Conclusions and Future Directions}
\label{conclusions}
We have presented the SOS boosting -- a generic method for improving various image denoising algorithms. The improvement is achieved by treating the denoiser as a ''black-box'' and repeating 3 simple SOS-steps: (i) \emph{Strengthening} the signal,
(ii) \emph{Operating} the denoising algorithm, and (iii) \emph{Subtracting} the previous denoised image from the result. In addition, we provided an interesting local-global interpretation, called ''sharing the disagreement'' boosting, indicating that the SOS boosting not only leverages the improved SNR of the estimates, but also reduces the gap between the local patch processing and the global need for a whole denoised image, all in the context of the K-SVD denoising algorithm. We also constructed the matrix-formulation of the K-SVD (and similar algorithms), showing that its eigenvalues are in the range of 0 to 1. Under these conditions, we have studied the convergence of the SOS boosting recursive function, leading to the conclusion that for various known denoising algorithms, the SOS boosting is guaranteed to converge. Moreover, a generalization of the SOS function has been obtained by introducing two parameters that govern the steady-state result, soften the requirements for convergence (the eigenvalues range) and the rate-of-convergence. We also provided a closed-form expression for the parameter that leads to the fastest convergence.

Finally, we have introduced a graph-based interpretation, showing that the SOS boosting acts as a graph Laplacian regularization method, thus effectively estimating the structure of the underlying signal. Inspired by the SOS scheme, we suggested novel recursive algorithms that treat the denoiser as a ''black-box'' in order to minimize related graph Laplacian objective functions, without explicitly constructing the weighted graph.

The proposed algorithm is easy to use, it reduces the local-global gap, acting as a graph Laplacian regularizer, it is applicable to a wide range of denoising algorithms and it converges -- these make it a powerful and convenient tool for improving various denoising algorithms, as demonstrated in the experiments.

It is intriguing to study the proposed iterative algorithms that are defined in Section \ref{graph} (which minimize different graph Laplacian cost functions). They may lead to better results than the original methods due to the non-linearity of the denoiser and its adaptivity to the signal-strengthened image. We hope that other restoration problems, such as super-resolution \cite{NL_SR_REF}, interpolation/inpainting \cite{romano_interp} and more, could also benefit from a similar concept.

\appendix

\section{Periodic Boundary Condition}
\label{periodic}
Following Equation (\ref{denosied_image_filtered}), the periodic boundary condition affects the term $ \sum_{i=1}^{N} \R_i^T\R_i $, which is a diagonal matrix that counts the number of appearances per pixel in the final patch-averaging. Due to boundary effects, the values along the diagonal in this matrix are different (since the number of overlapping patches in the image borders is smaller than in other areas). As shown in Appendix \ref{Wproperties}, the numerator of Equation (\ref{denosied_image_filtered}) is a symmetric and positive definite matrix. Each row of this matrix is normalized by the number of overlapping patches, i.e., by the corresponding diagonal element from $ ( \mu \I + \sum_{i=1}^{N} \R_i^T\R_i ) $. As a result, the rows and columns are normalized by different constants which ruin the symmetric property. By assuming periodic boundary condition, all the pixels have the same number of representations, which equals to the patch size. In this case, we get $ \sum_{i=1}^{N} \R_i^T\R_i = n\I $, where $ n $ is the patch size and thus, the rows and columns are normalized by the same constant, which preserves the symmetric property of $ \W $.

\section{Properties of the K-SVD Filter-Matrix}
\label{Wproperties}
{\em Proof of property 1: symmetric $ \W=\W^T $.}
Following Equation (\ref{denosied_image_filtered}) and based on the assumption of periodic boundary condition (see Appendix \ref{periodic}) the matrix $ \W $ can be expressed as
\begin{align}
\label{sym1}
\W & { = \left( \mu \I + \sum_{i=1}^{N} \R_i^T\R_i \right)^{-1} \left( \mu \I + \sum_{i=1}^{N} \R_i^T \D_{s_i}\left(\D_{s_i}^{T}\D_{s_i}\right)^{-1}\D_{s_i}^{T} \R_i \right) } \\ \notag
   & { = \left( \mu \I + n\I \right)^{-1} \left( \mu \I + \sum_{i=1}^{N} \R_i^T \D_{s_i}\left(\D_{s_i}^{T}\D_{s_i}\right)^{-1}\D_{s_i}^{T} \R_i \right)} \\ \notag
   & { = \frac{1}{\mu + n}\left( \mu \I + \sum_{i=1}^{N} \R_i^T \D_{s_i}\left(\D_{s_i}^{T}\D_{s_i}\right)^{-1}\D_{s_i}^{T} \R_i \right)}. \notag
\end{align}
Notice that the term $ \Z_i = \R_i^T \D_{s_i}(\D_{s_i}^{T}\D_{s_i})^{-1}\D_{s_i}^{T} \R_i $ is symmetric (in fact, it is also positive semi-definite (PSD), as it is built of $ \R_i^T \hat{\Z}_i \R_i $, where $ {\hat \Z}_i $ is a projection matrix \cite{horn2012matrix}). Thus, $ \W $ is built as a sum of $ N+1 $ matrices, each of them symmetric, which leads to the claimed symmetry, $ \W=\W^T $. \endproof

{\em Proof of properties 2 \& 3:} \emph{positive definite $ \W \succ 0 $ and \mbox{$ \lambda_{min}(\W) \geq \frac{\mu}{\mu + n} $}}.
As we have seen above, $ \W $ can be written as
\begin{align}
\label{WandZ}
\W & { = \frac{\mu}{\mu + n}\I + \frac{1}{\mu + n}\sum_{i=1}^{N} \left(\R_i^T \D_{s_i}\left(\D_{s_i}^{T}\D_{s_i}\right)^{-1}\D_{s_i}^{T} \R_i\right)} \\ \notag
   & { = \frac{\mu}{\mu + n}\I + \frac{1}{\mu + n}\sum_{i=1}^{N} \Z_i} \\ \notag
   & { = \b\I + \A}, \notag
\end{align}
where $ \A = \frac{1}{\mu + n}\sum_{i=1}^{N} \Z_i$, and $ \b = \frac{\mu}{\mu + n}>0 $.
As mentioned above, $ \Z_i $ is PSD and therefore, according to Equation (\ref{WandZ}), $ \A $ is a linear combination of $ \Z_i \succeq \textbf{0} $, thus it is PSD as well \cite{horn2012matrix}. Finally, based on the fact that the eigenvalues of $ \A+\b\I $ are lower-bounded by $ \b>0 $, we get that $ \W \succ \textbf{0}$, with minimal eigenvalue satisfying
\begin{align}
\label{minlambda}
\lambda_{min}(\W) \geq \b = \frac{\mu}{\mu + n}.\quad \endproof  
\end{align}

{\em Proof of property 4 (\& 5): $ \W\one = \W^T\one = \one $}. This property originates directly from the K-SVD denoising algorithm, which preserves the DC component of the image. In general, the trained dictionary is adapted to the image patches after their DC is removed. Once trained, the DC is returned as an additional atom $ d_0 $. Thus, this DC atom is necessarily orthogonal to the rest of the dictionary atoms. Each patch is represented by
\begin{align}
\label{dsi}
{\D_{s_i} = [d_0, d_1 ,d_2, ...] = [d_0, \tilde{\D}_{s_i}],}  
\end{align}
where $ d_0 \in \RR^{n} $ is the DC atom (the DC atom is included if the mean of the patch is not zero) and for the rest of the atoms (if any), $ d_i^T d_0 = 0 $, i.e., $ d_0^T{\tilde \D}_{s_i} = \underbar{0} $. Note that the Gram matrix in this case is block-diagonal
\begin{align}
\label{bd}
\D_{s_i}^T\D_{s_i} = \begin{bmatrix}
1 & 0 \\ \underbar{0} & \tilde{\D}_{s_i}^T\tilde{\D}_{s_i}
\end{bmatrix}.
\end{align}
Following Equation (\ref{WandZ}), when multiplying W by a constant image, $ \one $, we get
\begin{align}
\label{w1}
\W\one = \frac{\mu}{\mu + n} \one + \frac{1}{\mu + n}\sum_{i=1}^{N} \Z_i \one.  
\end{align}
Let us look at the term $ \Z_i \one$,
\begin{align}
\label{z1}
\Z_i\one = \R_i^T \D_{s_i}\left(\D_{s_i}^{T}\D_{s_i}\right)^{-1}\D_{s_i}^{T} \R_i \one.
\end{align}
$ \R_i \one = \one$ -- this is a shorter constant vector of length $ n $.
\mbox{$ \D_{s_i}^T \one = [n, 0, 0, ...]^T $} due to the orthogonality of the rest of the atoms to the DC.
Multiplication of the the inverse of $ \D_{s_i}^T \D_{s_i} $ results with $ [n, 0, 0, ...]^T $.
The outcome of \mbox{$ \D_{s_i} [n, 0, 0, ...]^T = nd_0 = \one$} is the desired DC patch of length $ n $. Finally, $ \R_i^T $ returns the resulting constant patch back to its original location in the image. 

Returning to Equation (\ref{w1}), the input image $ \one $ is divided into $ N $ overlapping patches (one per pixel) of length $ n $, where each DC patch is represented perfectly by $ \Z_i $. Since each pixel appears in $ n $ patches, we get that $ \sum_{i=1}^{N} \Z_i \one = n\one $. As a result, \mbox{$ \W \one = \frac{\mu}{\mu + n} \one + \frac{n}{\mu + n}\one = \one$} preserves the DC of the image. Based on the symmetric property of $ \W $, we get that $ \W^T\one = \W\one = \one $. \endproof

\begin{figure}[tbp]
\centering
\mbox{\subfigure[$ \Omega_1 $]{\epsfig{figure=./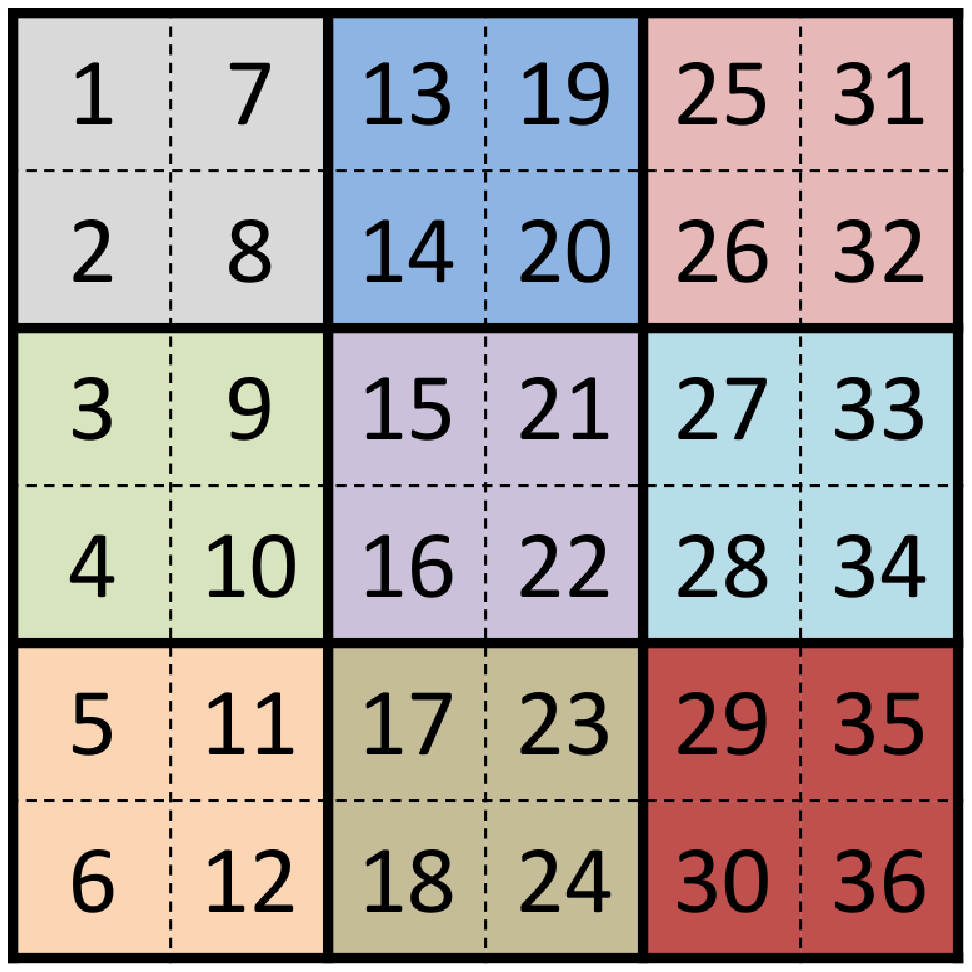, width = 1.5in}} 
 \subfigure[$ \Omega_2 $]{\epsfig{figure = ./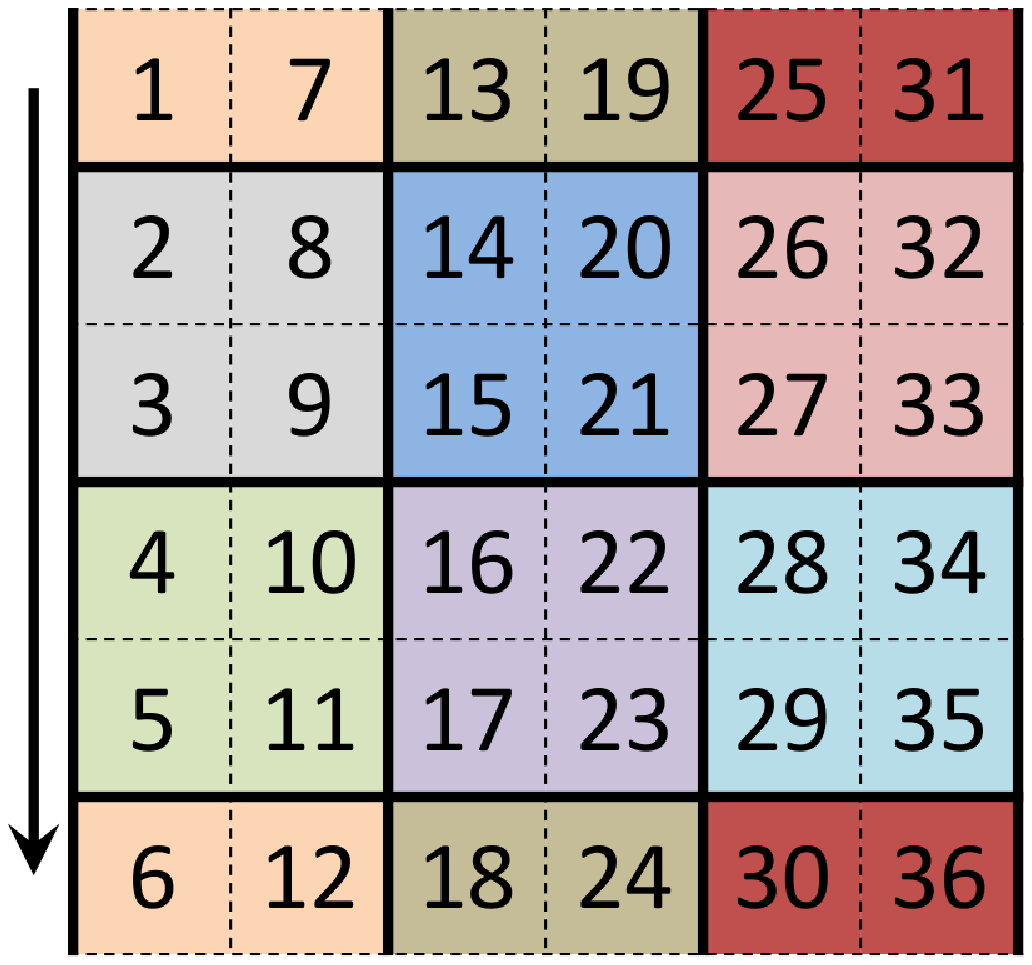, width = 1.5in}}
 \subfigure[$ \Omega_3 $]{\epsfig{figure=./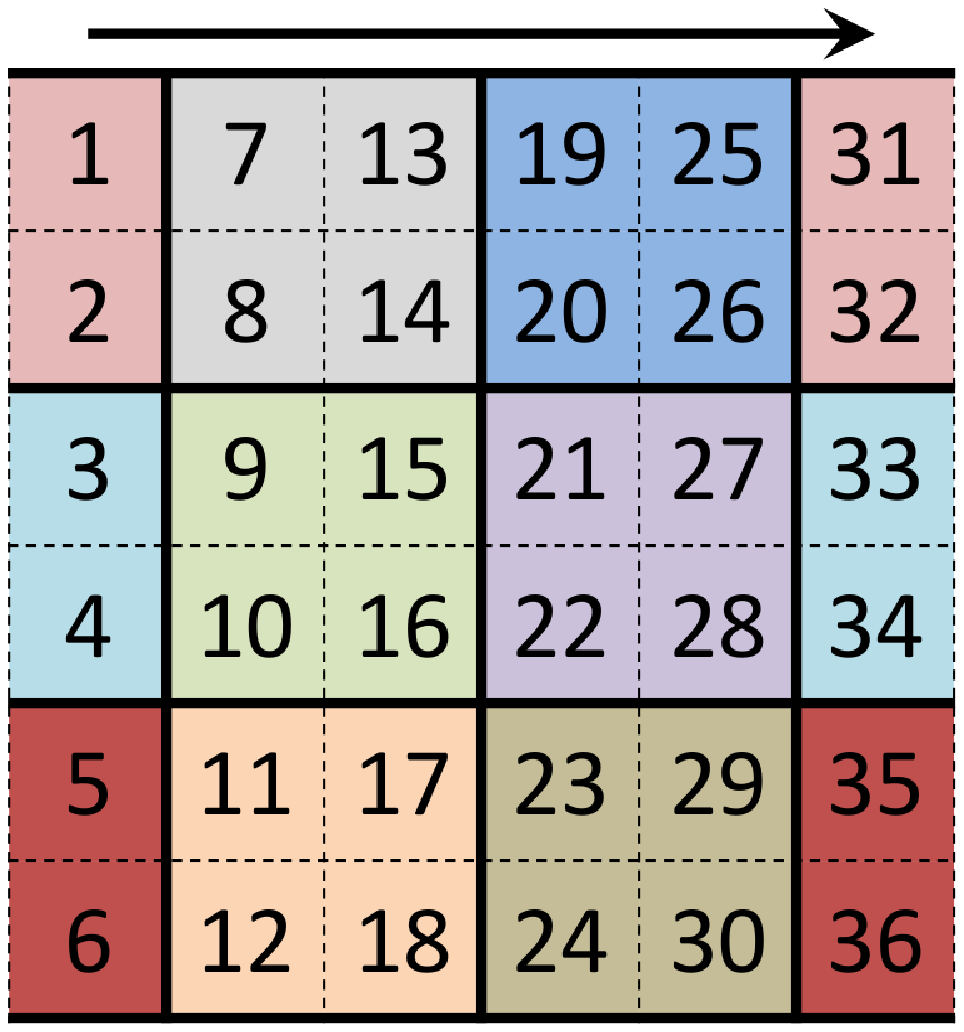, width = 1.5in}}
 \subfigure[$ \Omega_4 $]{\epsfig{figure=./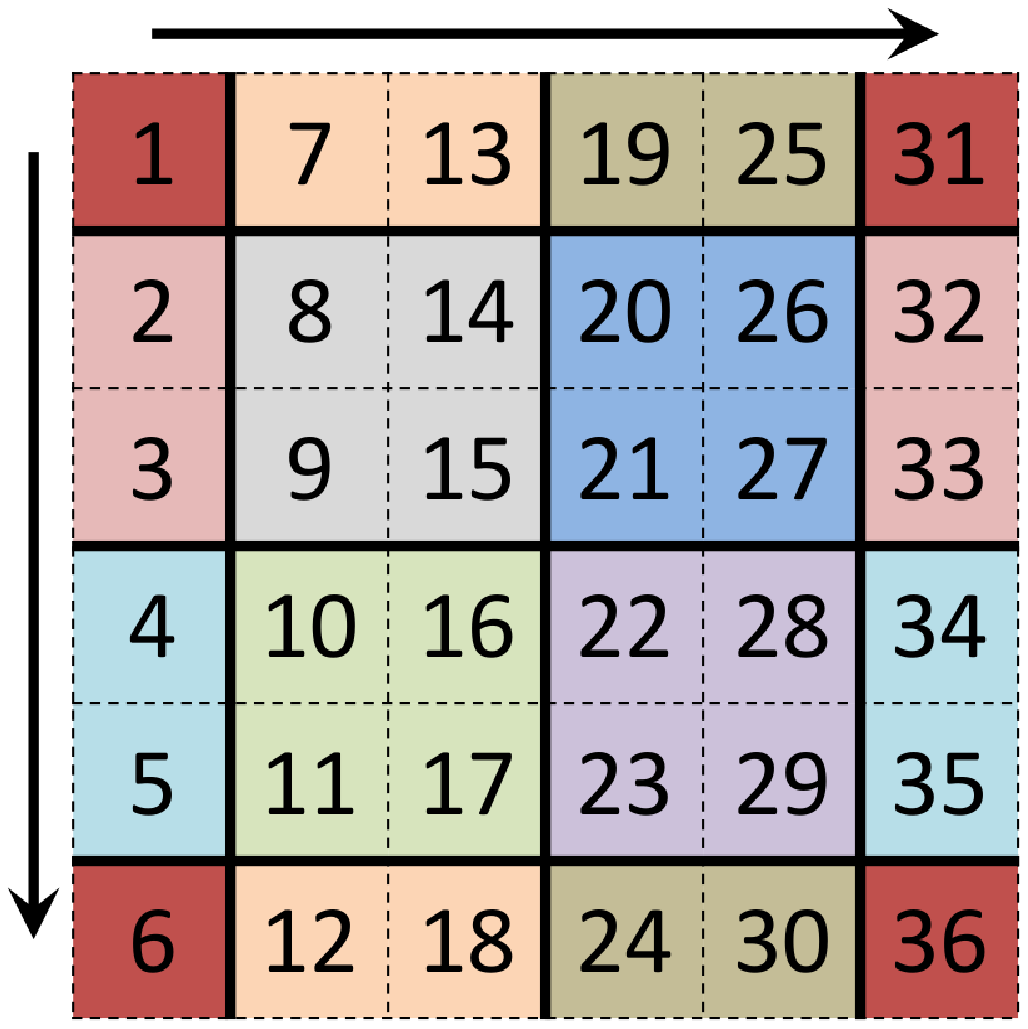, width = 1.5in}}}
\caption{Dividing $r \times c = 6 \times 6 $ input image (its pixels are numbered in $ 1,2,...,36 $) into $ \sqrt{n} \times \sqrt{n} = 2 \times 2 $ overlapping patches (solid squares in different colors). We assume a cycling processing of the patches (periodic boundary condition), thus there are $ N = 36 $ such patches, which can be divided into $ \{\Omega_j\}_{j=1}^{4} $ possible distinct groups of non-overlapping patches: (a) $ \Omega_1 $ -- without any shift, (b) $ \Omega_2 $ -- down shift, (c) $ \Omega_3 $ -- right shift, (d) $ \Omega_4 $ -- down \& right shift.}
\label{shiftPatches}
\end{figure}

{\em Proof of property 6 (\& 7): $ \|\W\|_2=1 $}. In order to denoise the image, we break it into $ \sqrt{n} \times \sqrt{n} $ overlapping patches. The following proof relies on the observation that we can divide the $ N $ \emph{overlapping} patches into $ \{\Omega_j\}_{j=1}^{n} $ distinct groups of \emph{non-overlapping} patches, as demonstrated in Figure \ref{shiftPatches}. As a consequence, the matrix $ \W $, which is a sum over $ N $ projection matrices (one per each patch), can be expressed as a sum over $ n $ distinct groups:
\begin{align}
\label{sumofProj}
\W & { = \frac{\mu}{\mu + n}\I + \frac{1}{\mu + n}\sum_{i=1}^{N} \left(\R_i^T \D_{s_i}\left(\D_{s_i}^{T}\D_{s_i}\right)^{-1}\D_{s_i}^{T} \R_i\right)} \\ \notag
   & { = \frac{\mu}{\mu + n}\I + \frac{1}{\mu + n}\sum_{i=1}^{N} \left(\R_i^T \hat{\Z}_i \R_i\right)} \\ \notag
   & { = \frac{\mu}{\mu + n}\I + \frac{1}{\mu + n}\sum_{j=1}^{n} \left[ \sum_{k\in \Omega_j} \left(\R_k^T \hat{\Z}_k \R_k\right)\right] } \\ \notag
   & { = \frac{\mu}{\mu + n}\I + \frac{1}{\mu + n}\sum_{j=1}^{n} \tilde{\W}_j}. \notag
\end{align}
Based on the property that $ \R_k\R_l^{T} = \textbf{0} $ for all $ (k \ne l) \in \Omega_j $, the following shows that $ \tilde{\W}_j $ is an idempotent matrix:
\begin{align}
\label{w2w}
(\tilde{\W}_j)^{2} & { = \left[ \sum_{k\in \Omega_j} (\R_k^T \hat{\Z}_k \R_k)\right]\left[ \sum_{l\in \Omega_j} (\R_l^T \hat{\Z}_l \R_l)\right] } \\ \notag
& { = \sum_{k\in \Omega_j} \left(\R_k^T \hat{\Z}_k \R_k\right)\left(\R_k^T \hat{\Z}_k \R_k\right) } \\ \notag
& { = \sum_{k\in \Omega_j} \left(\R_k^T \hat{\Z}_k\hat{\Z}_k \R_k\right) } \\ \notag
& { = \sum_{k\in \Omega_j} \left(\R_k^T \hat{\Z}_k \R_k\right) } \\ \notag
& { = \tilde{\W}_j},
\end{align}
where we have used the equality $ \R_k\R_k^T = \I\in\RR^{n\times n}$, and
\begin{align}
\label{z2z}
(\hat{\Z}_k)^{2} & { = \left(\D_{s_i}\left(\D_{s_i}^{T}\D_{s_i}\right)^{-1}\D_{s_i}^{T}\right)\left(\D_{s_i}\left(\D_{s_i}^{T}\D_{s_i}\right)^{-1}\D_{s_i}^{T}\right)} \\ \notag
& { = \D_{s_i}\left(\D_{s_i}^{T}\D_{s_i}\right)^{-1}\D_{s_i}^{T}} \\ \notag
& { = \hat{\Z}_k}.
\end{align}
As a result, following Equation (\ref{w2w}), we can infer that $ \| \tilde{\W}_j \|_2 = 1 $ \cite{horn2012matrix}. Finally, using the matrix-norm inequalities we get 
\begin{align}
\label{normW}
\|\W\|_2 & {   =  \| \frac{\mu}{\mu + n}\I + \frac{1}{\mu + n}\sum_{j=1}^{n} \tilde{\W}_j \|_2} \\ \notag
         & { \leq  \frac{\mu}{\mu + n}  \cdot \| \I \|_2 +  \frac{1}{\mu + n}  \cdot \sum_{j=1}^{n} \| \tilde{\W}_j \|_2 } \\ \notag
         & {  =      \frac{\mu}{\mu + n} + \frac{n}{\mu + n} } \\ \notag
         & {  = 1}. \notag
\end{align}
To conclude, based on the above and by relying on the property that $ 1 $ is an eigenvalue of $ \W $, we get that $ \lambda_{max}(\W) = 1 $, i.e, $ \|\W\|_2 = 1 $. \endproof

{\em Proof of property 8: $\|\W - \I\|_2\leq\frac{n}{\mu + n} $}. 
In general, the eigenvalues of $ \A+\b\I $ are bigger than the eigenvalues of $ \A $ by the constant $ \b $ \cite{horn2012matrix}. Therefore, the eigenvalues of 
$  (\W-\I) $ 
equal to $ \lambda(\W) - 1 $. Based on $ \frac{\mu}{\mu + n} \leq \lambda(\W) \leq 1 $, we get that
\begin{align}
\label{eigmaxwhat}
           \|\W-\I\|_2 & \leq |\lambda_{min}(\W) - 1| \\ \notag
                       & = \left| \frac{\mu}{\mu + n} - 1 \right| \\ \notag
                       & = \frac{n}{\mu + n}. \quad \endproof
\end{align}

\section{Equivalence between the SOS boosting and sharing the disagreement procedure}
\label{sos_and_disagree}
In the context of the K-SVD image denoising \cite{KSVD_REF1}, we show an equivalence between the SOS boosting recursive function (Equation (\ref{sos_algo})) and the disagreement and sharing approach (Algorithm \ref{consensus_algo}). The following study assumes fixed supports and dictionary during the iterations, i.e., the projection matrix $ \D_{s_i} $ of the $ i^{th} $ patch is known and fixed. In addition, a periodic boundary is considered (see Appendix \ref{periodic}), and we use the K-SVD matrix form (see Equation (\ref{sym1})) with $ \mu = 0 $, i.e.,
\begin{align}
\label{sym2}
\W & { = \frac{1}{n} \sum_{i=1}^{N} \R_i^T \D_{s_i}\left(\D_{s_i}^{T}\D_{s_i}\right)^{-1}\D_{s_i}^{T} \R_i } \\\notag
   & { = \frac{1}{n} \sum_{i=1}^{N} \R_i^T \hat{\Z}_i \R_i. } \notag
\end{align}

Following Algorithm \ref{consensus_algo}, we denote by $ \p_i^k $ the $ k^{th} $ iteration input patch to the denoising algorithm, which is influenced by the neighbors information. Using the projection matrix \mbox{$ \hat{\Z}_i = \D_{s_i}(\D_{s_i}^{T}\D_{s_i})^{-1}\D_{s_i}^{T} $}, let us compute the disagreement patch -- the difference between an independent denoised patch, as defined in Equation (\ref{clean_patch}),
\begin{align}
\label{disagree_p1}
\hat{\p}_i^{k} &= { \hat{\Z}_i\p_i^k },
\end{align}
and its corresponding patch from the global outcome (after patch averaging), $ \R_i\xh^{k} $, thus expressed by
\begin{align}
\label{disagree_2}
\q_i^{k} &= { \hat{\p}_i^{k} - \R_i\xh^{k}. }
\end{align}
Next, we subtract the disagreement patch from the corresponding noisy one, i.e.,
\begin{align}
\label{disagree_3}
\p_i^{k+1} &= { \R_i\y - \q_i^{k} } \\ \notag 
           &= { \R_i\y - \hat{\p}_i^{k} + \R_i\x^{k} } \\ \notag
           &= { \R_i\y - \hat{\Z}_i\p_i^{k} + \R_i\x^{k}. } \notag
\end{align}
Following Equation (\ref{disagree_p1}), the denoised version of $ \p_i^{k+1} $ is given by
\begin{align}
\label{disagree_4}
\hat{\p}_{i}^{k+1} &= { \hat{\Z}_i{\p}_i^{k+1} } \\ \notag
                   &= { \hat{\Z}_i\R_i\y - \hat{\Z}_i\hat{\Z}_i\p_i^{k} + \hat{\Z}_i\R_i\xh^{k} } \\ \notag
                   &= { \hat{\Z}_i\R_i\y - \hat{\Z}_i\p_i^{k} + \hat{\Z}_i\R_i\xh^{k} } \\ \notag
                   &= { \hat{\Z}_i\R_i\left(\y + \xh^{k}\right) - \hat{\p}_i^{k} } \notag         
\end{align}
where we use the idempotent property of $ \hat{\Z}_i $. Similarly to Equation (\ref{denosied_image_filtered}) and based on Equation (\ref{sym2}), the global denoised image is formulated by
\begin{align}
\label{disagree_5}
\xh^{k+1}  &= { \frac{1}{n} \sum_{i=1}^{N} {\R_i^{T}\hat{\p}_{i}^{k+1}} } \\ \notag
           &= { \frac{1}{n} \sum_{i=1}^{N} {\R_i^{T}\left( \hat{\Z}_i\R_i\left(\y + \xh^{k}\right) - \hat{\p}_i^{k} \right)}  } \\ \notag
           &= { \left( \frac{1}{n} \sum_{i=1}^{N} \R_i^{T}\hat{\Z}_i\R_i\right)\left(\y + \xh^{k}\right) - \frac{1}{n} \sum_{i=1}^{N} \R_i^{T}\hat{\p}_{i}^{k}  } \\ \notag
           &= { \W\left(\y + \xh^{k}\right) - \xh^{k}. } \notag
\end{align}
Thus, the SOS boosting (Equation (\ref{sos_algo})) and the ''sharing the disagreement'' algorithms are equivalent for a fixed $ \W $.

\section{Seeking for the fastest convergence}
\label{best_params}
\begin{figure}[tbp]
\centering
\mbox{{\epsfig{figure=./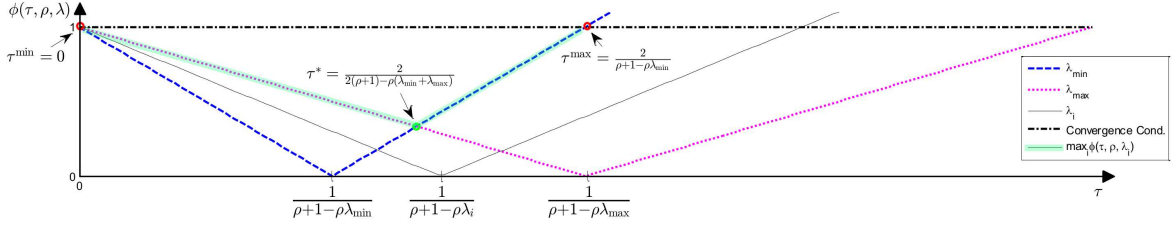, width = \linewidth}}}
\caption{Illustration of $ \phi(\tau,\rho,\lambda) $, the eigenvalues of the SOS error's transition matrix (in absolute value), as a function of $ \tau $. The dashed (blue), solid (black) and dotted (magenta) lines are corresponding to $ \phi $ with the arguments $ \lambda_{min}  $, $ \lambda_i $ and $ \lambda_{max} $, respectively. The horizontal black dash-dotted line denotes the condition for convergence, determining the values of $ \tau^{min} $ and $ \tau^{max} $ (red circles). The highlighted line illustrates the function $ \max_{i}{\phi(\tau, \rho, \lambda_i)} $, where $ \tau^{*} $ obtains its minimal value (green circle).}
\label{eig_tau_graph}
\end{figure}
We aim to provide conditions for the SOS algorithm to converge in terms of the parameters $ \rho $ and $ \tau $, and in addition get closed-form expression for $ \tau^{*} $, the solution of Equation (\ref{best_tau_rho}).
The eigenvalues of the SOS error's transition matrix (in absolute value) are formulated by
\begin{align}
\label{eigenval}
\phi(\tau, \rho, \lambda_i) & =  | \tau (\rho \lambda_i - \rho - 1)+1|,
\end{align}
where $ \{ \lambda_i \}_{i=1}^{N} $ are the eigenvalues of $ \W $. In the following analysis we shall assume that $\lambda_i \le 1$. Figure \ref{eig_tau_graph} plots $ \phi(\tau, \rho, \lambda_{min})$, $ \phi(\tau, \rho, \lambda_{i})$ and $ \phi(\tau, \rho, \lambda_{max}) $ as a function of $ \tau $, for $ \rho>0 $. As can been seen, $\phi $ has a negative slope for $ 0 \leq \tau \leq \frac{1}{\rho + 1 -\rho \lambda_i}$ and a positive slope for $ \tau > \frac{1}{\rho + 1 -\rho \lambda_i}$. All of these are true under the assumption that $ \rho \lambda_i - \rho - 1 < 0 $, which always holds for $ \lambda_i=1 $ and for 
\begin{align}
\label{cond_rho}
\rho > \rho^{min} = \min_i \frac{-1}{1-\lambda_i} \quad \forall \lambda_i \neq 1.
\end{align}

\begin{figure}[tbp]
\centering
\mbox{{\epsfig{figure=./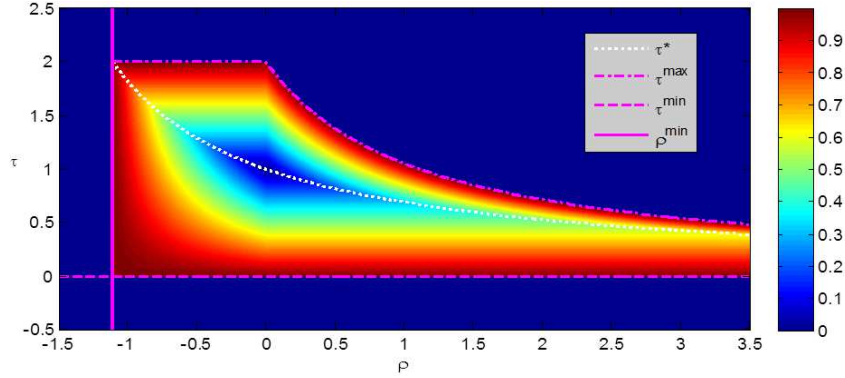, width = 4.5in}}}
\caption{Demonstration of $ \max_{i}{\phi(\tau, \rho, \lambda)} $ for $ \lambda \in [0.1,1] $, where cold and warm colors indicate small and large values, respectively. The magenta dash-dotted, dashed and solid lines plot the boundaries for convergence, corresponding to $ \tau^{max} $, $ \tau^{min} $ and $ \rho^{min} $, respectively. The white dotted line plots the analytic expression of $ \tau^{*} $ as a function of $ \rho $.}
\label{tau_rho_graph}
\end{figure}

Next, we shall find the valid range of $ \tau \in (\tau^{min},\tau^{max})$, satisfying $ \phi<1 $. Following Figure \ref{eig_tau_graph}, the minimal $ \tau $ that leads to an intersection with $ \phi = 1 $ is
\begin{align}
\label{cond_tau_min}
\tau^{min} = 0.
\end{align}
Then, by increasing $ \tau $ we get $ \phi<1 $, until reaching to $ \tau^{max} $ -- the first $ \tau>\tau^{min} $ that obtains $ \phi=1 $ again. As demonstrated in Figure \ref{eig_tau_graph}, for $ \rho>0 $, we get $ \tau^{max} = \frac{2}{\rho+1-\rho\lambda_{min}} $. While for $ \rho<0 $ we get $ \tau^{max} = \frac{2}{\rho+1-\rho\lambda_{max}} $, thus
\begin{align}
\label{cond_tau_max}
\tau^{max}=\min\left\lbrace \frac{2}{\rho+1-\rho\lambda_{min}} ,\frac{2}{\rho+1-\rho\lambda_{max}}\right\rbrace .
\end{align}
The obtained conditions on $ \rho $ and $ \tau $ are illustrated in Figure \ref{tau_rho_graph}, which plots the function $ \max_{i} \phi(\tau, \rho, \lambda_i) $ for $ \lambda \in [0.1,1] $, where $ \phi < 1$. As can be seen, $ \rho^{min} $, $ \tau^{min} $ and $ \tau^{max} $ bound perfectly the valid range for convergence.

We now turn to discuss $ \tau^{*} $, the solution of Equation (\ref{best_tau_rho}),
\begin{align}
\tau^{*} = \min_{\tau} \max_{1 \le i \le N} \phi(\tau, \rho, \lambda_i) \hspace{0.7em} \text{s.t.} \hspace{0.7em} \forall i  \hspace{0.5em} \phi(\tau, \rho, \lambda_i)<1. \notag
\end{align}
Following Figure \ref{eig_tau_graph}, under the obtained conditions on $ \rho $ and $ \tau $, we seek for minimum value of the highlighted graph (the green circle). As can be seen, $ \tau^{*} $ is obtained by the following equality
\begin{align}
\label{tau_star_0}
\phi(\tau^{*}, \rho, \lambda_{min}) = \phi(\tau^{*}, \rho, \lambda_{max}).
\end{align}
Based on condition (\ref{cond_rho}), we can infer that
\begin{align}
\label{tau_star_1}
\phi(\tau^{*}, \rho, \lambda_{min}) =  -\tau^{*} (\rho \lambda_{min} - \rho - 1)-1
\end{align}
and 
\begin{align}
\label{tau_star_2}
\phi(\tau^{*}, \rho, \lambda_{max}) =  \tau^{*} (\rho \lambda_{max} - \rho - 1)+1.
\end{align}
Substituting Equations (\ref{tau_star_1}) and (\ref{tau_star_2}) into Equation (\ref{tau_star_0}) lead to
\begin{align}
\tau^{*} =  \frac{2}{2(\rho+1)-\rho(\lambda_{min} + \lambda_{max})}. \notag
\end{align}
In addition, by substituting $ \tau^{*} $ into Equation (\ref{tau_star_2}) we get
\begin{align}
\gamma^{*} =  \frac{\rho(\lambda_{max}-\lambda_{min})}{2(\rho+1)-\rho(\lambda_{min} + \lambda_{max})}. \notag
\end{align}
 
An illustration that $ \tau^* $ obtains the minimal eigenvalue of the error's transition matrix is shown in Figure \ref{tau_rho_graph}, as a curve running through all possible values of $ \rho $. Note that the fastest convergence is obtained for the couple $ \tau=1$ and $ \rho=0 $ ($ \gamma^*=0 $), i.e., applying the original denoising algorithm only once, without any SOS step. However, as we aim to improve the denoising performance by strengthening the underlying signal (setting $ \rho \neq 0$), this choice of parameters is meaningless. 

\bibliographystyle{siam}  
\bibliography{docultexmm}

\end{document}